\definecolor{codegreen}{rgb}{0,0.6,0}
\definecolor{codegray}{rgb}{0.5,0.5,0.5}
\definecolor{codepurple}{rgb}{0.58,0,0.82}
\definecolor{backcolour}{rgb}{0.95,0.95,0.92}
\definecolor{brickred}{rgb}{0.8, 0.25, 0.33}
\newcommand{\R}{\mathbb{R}}
\newcommand{\N}{\mathcal{N}_0}
\newcommand{\X}{\boldsymbol{X}}
\newcommand{\Y}{\boldsymbol{Y}}
\newcommand{\x}{{\boldsymbol{x}}}
\newcommand{\w}{\boldsymbol{w}}
\newcommand{\U}{\boldsymbol{u}}
\newcommand{\V}{\boldsymbol{v}}
\newcommand{\y}{{\boldsymbol{y}}}
\newcommand{\zero}{{\boldsymbol{0}}}
\newcommand{\z}{{\boldsymbol{z}}}
\newcommand{\etab}{{\boldsymbol{\eta}}}
\newcommand{\bmu}{{\boldsymbol{\mu}}}
\newcommand{\bxi}{{\boldsymbol{\xi}}}
\newcommand{\ygenSDE}{{\tilde{\boldsymbol{y}}}}
\newcommand{\ygenODE}{{\widehat{\boldsymbol{y}}}}
\newcommand{\pdata}{p_\mathrm{data}}
\newcommand{\pgenSDE}{{\tilde{p}}}
\newcommand{\pgenODE}{{\widehat{p}}}
\newcommand{\qgenSDE}{{\tilde{q}}}
\newcommand{\qgenODE}{{\widehat{q}}}
\newcommand{\bSigma}{{\boldsymbol{\Sigma}}}
\newcommand{\SigmagenSDE}{{\tilde{\bSigma}}}
\newcommand{\SigmagenODE}{{\widehat{\bSigma}}}
\newcommand{\fSDE}{{\tilde{f}}}
\newcommand{\fODE}{{\widehat{f}}}
\newcommand{\ySDEEM}{{\tilde{\boldsymbol{y}}^{\Delta,\text{EM}}}}
\newcommand{\ySDEEI}{{\tilde{\boldsymbol{y}}^{\Delta,\text{EI}}}}
\newcommand{\ySDEDDPM}{{\tilde{\boldsymbol{y}}^{\Delta,\text{DDPM}}}}
\newcommand{\yODEEuler}{{\widehat{\boldsymbol{y}}}^{\Delta,\text{Euler}}}
\newcommand{\yODEHeun}{{\widehat{\boldsymbol{y}}}^{\Delta,\text{Heun}}}
\newcommand{\yODERK}{{\widehat{\boldsymbol{y}}}^{\Delta,\text{RK$4$}}}
\newcommand{\pdataemp}{p_\mathrm{data}^{\text{\tiny emp.} }}
\newcommand{\pemp}{p^{\text{\tiny emp.} }}
\newcommand{\A}{\boldsymbol{A}}
\newcommand{\G}{\boldsymbol{\Gamma}}
\newcommand{\I}{\boldsymbol{I}}
\newcommand{\eps}{\varepsilon}
\newcommand{\E}{\mathbb{E}}
\DeclareMathOperator{\ADSN}{ADSN}
\DeclareMathOperator{\Wass}{\text{$\textbf{W}_{2}$}}
\DeclareMathOperator{\Wemp}{\text{$\textbf{W}^{\text{\tiny emp.} }_{2}$}}
\DeclareMathOperator{\Cov}{Cov}
\newcommand{\bt}{\mathbf{t}}
\newcommand{\OMN}{{M\times N}}
\newtheorem{prop}{Proposition}
\newtheorem{assumption}{Assumption}
\icmltitlerunning{Diffusion models for Gaussian distributions: Exact solutions and Wasserstein errors}
\begin{document}

\twocolumn[
\icmltitle{\large Diffusion models for Gaussian distributions: Exact solutions and Wasserstein errors}



\icmlsetsymbol{equal}{*}

\begin{icmlauthorlist}
\icmlauthor{Emile Pierret}{univorleans}
\icmlauthor{Bruno Galerne}{univorleans,IUF}
\end{icmlauthorlist}

\icmlaffiliation{univorleans}{Université d'Orléans, Université de Tours, CNRS, IDP, UMR 7013, Orléans, France}
\icmlaffiliation{IUF}{Institut universitaire de France (IUF)}

\icmlcorrespondingauthor{Emile Pierret}{emile.pierret@univ-orleans.fr}
\icmlcorrespondingauthor{Bruno Galerne}{bruno.galerne@univ-orleans.fr}

\icmlkeywords{Diffusion models, image generation, differential equations, discretization schemes}

\vskip 0.3in
]



\printAffiliationsAndNotice{}  

\begin{abstract}
Diffusion or score-based models recently showed high performance in image generation.
They rely on a forward and a backward stochastic differential equations (SDE). The sampling of a data distribution is achieved by numerically solving the backward SDE or its associated flow ODE.
Studying the convergence of these models necessitates to control four different types of error: the initialization error, the truncation error, the discretization error and the score approximation.
In this paper, we theoretically study the behavior of diffusion models and their numerical implementation when the data distribution is Gaussian.
Our first contribution is to derive the analytical solutions of the backward SDE and the probability flow ODE and to prove that these solutions and their discretizations are all Gaussian processes.
Our second contribution is to compute the exact Wasserstein errors between the target and the numerically sampled distributions for any numerical scheme.
This allows us to monitor convergence directly in the data space, while experimental works limit their empirical analysis to Inception features.
An implementation of our code is available \href{https://github.com/emilePi/Diffusion-models-for-Gaussian-distributions-Exact-solutions-and-Wasserstein-errors}{online}.
\end{abstract}

\section{Introduction}

Over the last five years, diffusion models have proven to be a highly efficient and reliable framework for generative modeling \citep{song_generative_2019-1, ho_ddpm_2020_neurips, Song_etal_denoising_diffusion_implicit_models_ICLR2021, yang_song_score_based_sde_2021_ICLR, dhariwal_diffusion_beats_gan_2021_neurips, Karras_Elucidating_desing_diffusion_neurips_2022}.
First introduced as a discrete process, Denoising Diffusion Probabilistic Models (DDPM) \citep{ho_ddpm_2020_neurips} can be studied as a reversal of a continuous Stochastic Differential Equation (SDE) \citep{yang_song_score_based_sde_2021_ICLR}.
A forward SDE progressively transforms the initial data distribution by adding more and more noise as time progresses.
Then, the reversal of this process, called backward SDE, allows us to approximately sample the data distribution starting from Gaussian white noise.
Moreover, the SDE is associated with an Ordinary Differential Equation (ODE) called the probability flow~\citep{yang_song_score_based_sde_2021_ICLR}.
This flow preserves the same marginal distributions as the backward SDE and provides another way to sample the score-based generative model.

An important issue about diffusion models is the theoretical guarantees of convergence of the model:
How close to the data distribution is the generated distribution?
There are four main sources of errors to study to derive theoretical guarantees for diffusion models: (a) the \textit{initialization error} is induced when approximating the marginal distribution at the end of the forward process by a standard Gaussian distribution. (b) The \textit{discretization error} comes from the resolution of the SDE or the ODE by a numerical method. (c) The \textit{truncation error} occurs because the backward time integration is stopped at a small time $\eps>0$ to avoid numerical instabilities due to ill-defined score function near the origin. (d) The \textit{score approximation error} accounts for the mismatch between the ideal score function and the one given by the network trained using denoising score-matching.

Despite these numerous sources of errors,
a lot of numerical and theoretical research has been conducted to assess the generative capacity of diffusion models.
Several articles \citep{Giulio_how_much_enough_Entropy_2023,Karras_Elucidating_desing_diffusion_neurips_2022}
provide strong experimental studies for the choices of sampling parameters.
On the theoretical side, several works derive upper bounds on the 1-Wasserstein or TV distance between the data and the model distributions by making assumptions on the $L^2$-error between the ideal and learned score functions and on the compacity of the support of the data \citep{chen_flow_ode_provably_fast_2023_ICLR,Lee_convergence_score_based_neurips_2022,de_bortoli_et_al_diffusion_schrodinger_bridge_2021_neurips,chen_sampling_as_easy_as_learning_Score_ICLR_2023,convergence_score_based_general_data_distribution_lee_neurips_2022,benton_nearly_d_linear_convergence_bounds_ICLR_2024}, eventually under an additional manifold assumption \citep{de_bortoli_convergence_diffusion_manifold_hypothesis_2022_TMLR,wenliang_score_based_learn_manifold_neurips_2022,Chen_Score_approx_diff_model_low_dim_ICLR_2023}.
Yet, on one hand, to the best of our knowledge, the derived theoretical bounds mostly rely on worst case scenario and are not tight enough to explain the practical efficiency of diffusion models.
On the other hand, numerical considerations mostly rely on Inception feature distributions through the FID metric~\citep{Heusel_etal_GANs_local_nash_equilibrium_NIPS2017}.

Ideally, given a data distribution of interest, one would like to have an adapted estimation of the discrepancy between the data and the diffusion model samples, thus enabling adaptive hyperparameter selection for the sampling procedure.
As a first step towards reaching this goal, in the present work we study diffusion models applied to Gaussian data distributions.
While this setting has a priori no practical interest, since simulating Gaussian variates does not require a diffusion model, it provides a large parametric family of distributions for which the errors involved in diffusion model sampling can be completely understood.

When restricting the data distribution to be Gaussian, the resulting score function is a simple linear operator. Exploiting this specificity allows us to make the following contributions \textbf{under the assumption that the data distribution is Gaussian}:
\begin{itemize}
  \item We give the exact solutions for both the backward SDE and the probability flow ODE.
  \item We fully describe the Gaussian processes that occur when using classical sampling discretization schemes.
  \item We derive exact 2-Wasserstein errors for the corresponding sample distributions and are able to assess the influence of each error type on these errors, as illustrated by Figure~\ref{fig:cifar_measures}.
\end{itemize}
Our theoretical study allows for an analytical evaluation of any numerical sampler, either stochastic or deterministic.
In particular, it confirms the strength of best practice scheme such as Heun's method for the ODE flow~\citep{Karras_Elucidating_desing_diffusion_neurips_2022}. 
Our source code is available \href{https://github.com/emilePi/Diffusion-models-for-Gaussian-distributions-Exact-solutions-and-Wasserstein-errors}{online} and can be applied to any Gaussian data distribution of interest and gives insight to calibrate parameters of a diffusion sampling algorithm, e.g. by straightforwardly generalizing our study to higher order linear numerical schemes.

While our theoretical analysis relies on an exactly known score function,
we conduct additional experiments to assess the impact of the score approximation error.
Surprisingly, in the context of texture synthesis, we show that with a score neural network trained for modeling a specific Gaussian microtexture, a stochastic Euler-Maruyama sampler is more faithful to the data distribution than Heun's method, thus highlighting the importance of the score approximation error in practical situations.

\textbf{Plan of the paper:} First, we recall in \Cref{sec:reminders} the continuous framework for SDE-based diffusion models.
\Cref{sec:exact_sde_ode_solutions} presents our main theoretical results detailing the exact backward SDE and probability flow ODE solutions when supposing the data distribution to be Gaussian.
\Cref{sec:wasserstein_errors} gives explicit Wasserstein error formulas when sampling the corresponding processes,
yielding an ablation study for comparing the influence of each error type on several sampling schemes.
In \Cref{sec:score_approximation_adsn}, we study numerically a special case of Gaussian distribution for texture synthesis in order to evaluate the influence of the score approximation error occurring with a standard network architecture. Finally, we address discussion and limitations of our framework in \Cref{sec:discussion_limitations}.

\newlength{\cifarwidth}
\setlength{\cifarwidth}{\columnwidth}

\begin{figure*}
          \centering
          \begin{tabular}{cc}
          \includegraphics[width=\cifarwidth]{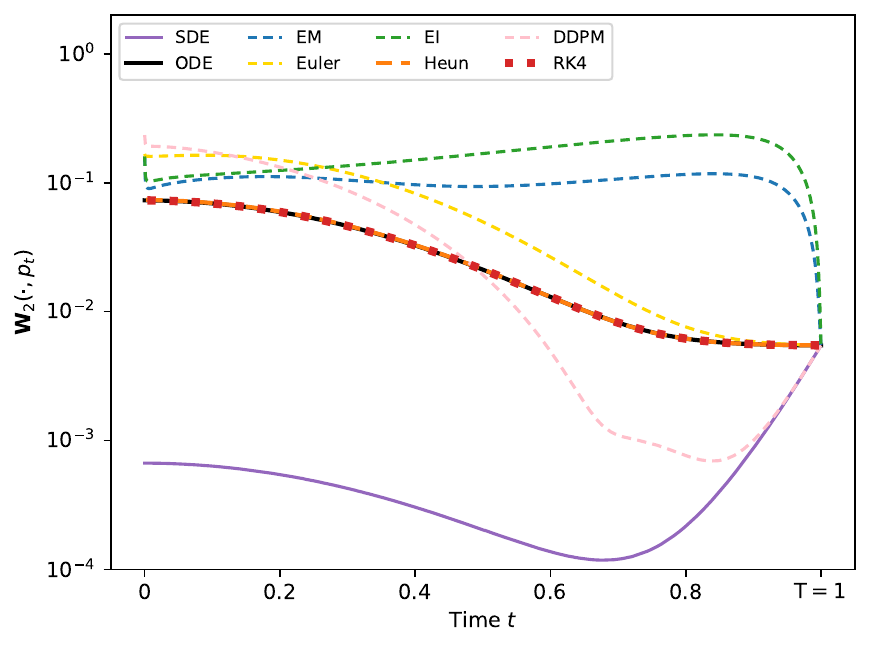}
          &
          \includegraphics[width = \cifarwidth]{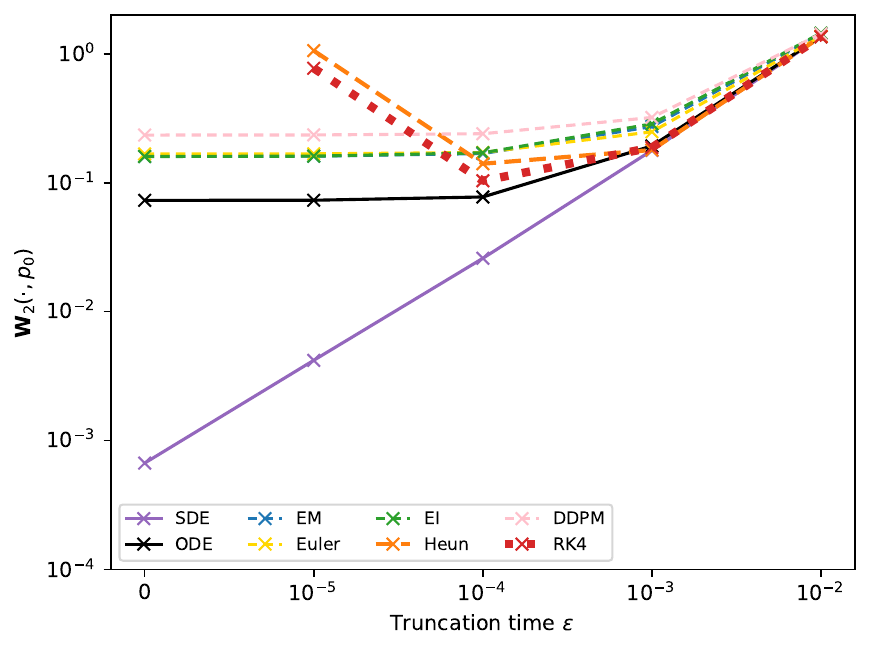}\\
         (a) Initialization error along the integration time  &
          (b) Truncation error for different truncation time $\varepsilon$
\end{tabular}
 \caption{\small \label{fig:cifar_measures} \textbf{Wasserstein errors for the diffusion models associated with the CIFAR-10 Gaussian.} Left: Evolution of the Wasserstein distance between $p_t$ and the distributions associated with the continuous SDE, the continuous flow ODE and four discrete sampling schemes with standard $\N$ initialization, either stochastic (Euler-Maruyama (EM), Exponential Integrator (EI) and Diffusion Denoising Probabilistic Models (DDPM)) or deterministic (Euler, Heun and Runge-Kutta 4 (RK4)).
While the continuous SDE is less sensitive than the continuous ODE (as proved by Proposition~\ref{prop:marginal_generatives}), the initialization error impacts all discrete schemes with a comparable order of magnitude.
Heun and RK4 methods have the lowest error and are very close to the theoretical ODE, except for the last step (which is not represented because unstable) that is usually discarded when using time truncation.
Right: Wasserstein errors due to time truncation for various truncation times $\eps$.
Using time truncation increases the error for all the methods except Heun's scheme and RK4 due to instability near the origin.
Interestingly, for the standard practice truncation time $\varepsilon = 10^{-3}$, all numerical schemes have a comparable error close to their continuous counterparts.}
\end{figure*}

\section{Preliminaries: Score-Based Models through Diffusion SDEs}
\label{sec:reminders}

This preliminary section follows the seminal work of Song \emph{et al.} \citep{yang_song_score_based_sde_2021_ICLR} and introduces specific notation to differentiate the exact backward process and the generative backward process obtained when starting from a white noise.
Given a target distribution $\pdata$ over $\R^d$, the forward diffusion process is the following variance preserving SDE
 \begin{equation}
 \label{eq:sde_forward}
 	d\x_t  =-\beta_t \x_t dt + \sqrt{2\beta_t}d\w_t,  0 \leq t \leq T,   \x_0 \sim \pdata
 \end{equation}
 where $\left(\w_t\right)_{t \geq 0}$ is a $d$-dimensional Brownian motion and $\beta$ is a positive weight function.
 The distribution $\pdata$ is noised progressively and the function $\beta$ is the variance of the added noise by time unit.
 We denote by $p_t$ the density of $\left(\x_t\right)$ for $t > 0$ since $\pdata$ can be supported on a lower-dimensional manifold~\citep{de_bortoli_convergence_diffusion_manifold_hypothesis_2022_TMLR}.
The SDE is designed so that $p_T$ is close to the Gaussian standard distribution that we denote
 $\N$ in whole paper.
Under some assumptions on the distribution $\pdata$ \citep{pardoux_1986}, the backward process $\left(\x_{T-t}\right)_{0\leq t \leq T}$ verifies the backward SDE
 \begin{align}
 \label{eq:sde_backward}
 d\y_t = \beta_{T-t}[\y_t + 2  \nabla \log & p_{T-t}(\y_t)]dt + \sqrt{2\beta_{T-t}}d\w_t,\notag \\
 &  0 \leq t < T, \quad \y_0 \sim p_T. 
 \end{align}
The objective is now to solve this reverse equation  in order to sample $\y_T \sim p_{\mathrm{data}}$.
However, the distribution $p_T$ is in general not known, and image\footnote{Although we may refer to data as images, our analysis is fully general and applies to any vector-valued diffusion model.} generation is achieved by sampling
 \begin{align}
 \label{eq:sde_backward_generative}
 d\ygenSDE_t = \beta_{T-t}[\ygenSDE_t + 2  \nabla \log & p_{T-t} (\ygenSDE_t) ]dt  + \sqrt{2\beta_{T-t}}d\w_t  , \notag \\
  & 0 \leq t < T, \quad \ygenSDE_0 \sim \N  .
 \end{align}
Note that approximating $p_T$ by $\N$ for the initialization $\y_0$ implies that the solution of the SDE of \Cref{eq:sde_backward_generative} does not exactly follow the target distribution $\pdata$ at final time.
An alternative way to approximately sample $\pdata$ is to use that every diffusion process is associated with a deterministic process whose trajectories share the same marginal probability densities $\left(p_t\right)_{0< t \leq T}$ as the SDE \citep{yang_song_score_based_sde_2021_ICLR}.
The deterministic process associated with \Cref{eq:sde_backward} is
     \begin{align}
 \label{eq:flow_ode}
 	d\x_t = -\beta_t[\x_t + \nabla\log & p_t(\x_t)]dt, \notag \\
    & 0 < t \leq T,  \x_0 \sim p_{\mathrm{data}}.
  \end{align}
This ODE can be solved in reverse-time to sample $\x_0$ from $\x_T \sim p_T$. Given $(\x_t)_{0\leq t \leq T}$ solution of \Cref{eq:flow_ode}, $(\x_{T-t})_{0\leq t \leq T}$ is solution of
	 \begin{equation}
 \label{eq:flow_reverse_ode}
 	d\y_t = \beta_{T-t}\left[\y_t + \nabla\log p_{T-t}(\y_t)\right]dt, \quad 0 \leq t < T.
  \end{equation}
	Again, in practice, the ODE which is considered to achieve image generation is
 \begin{align}
 \label{eq:flow_ode_gen}
 	d\ygenODE_t = \beta_{T-t}[\ygenODE_t +\nabla\log & p_{T-t}(\ygenODE_t)]dt, \notag \\
    &  0 \leq t < T, \ygenODE_0 \sim \N,
\end{align}
  where $p_T$ is replaced by $\N$.
  As a consequence of this approximation, the property of conservation of the marginals $\left(p_t\right)_{0\leq t \leq T}$ does not occur.
  We denote by $\left(\qgenSDE_t\right)_{0\leq t \leq T}$, respectively $\left(\qgenODE_t\right)_{0\leq t \leq T}$, the marginals of $\left(\ygenSDE_t\right)_{0\leq t \leq T}$ and $\left(\ygenODE_t\right)_{0\leq t \leq T}$ and $\pgenSDE_t = \qgenSDE_{T-t}$, $\pgenODE_t = \qgenODE_{T-t}$ the marginals of $\left(\ygenSDE_{T-t}\right)_{0\leq t \leq T}$ and $\left(\ygenODE_{T-t}\right)_{0\leq t \leq T}$ such that $\pgenSDE_t$ and $\pgenODE_t$ are approximations of $p_t$.

\section{Exact SDE and ODE Solutions}
\label{sec:exact_sde_ode_solutions}

 Our approach relies on deriving explicit solutions to the various SDE and ODE.
 We begin with the forward SDE in full generality obtained by applying the variation of constants (see the proof in Appendix~\ref{appendix:proof_prop:solution_forward_sde}). This resolution also provides an ODE verified by the covariance matrix of $\x_t$, that we denote $\bSigma_t = \Cov(\x_t)$.

 \begin{prop}[Solution of the forward SDE]
 \label{prop:solution_forward_sde}
  	The strong solution of \Cref{eq:sde_forward} can be written as:
 	\begin{equation}
 	\label{eq:solution_forward_sde}
 		\x_t = e^{-B_{t}}\x_0 + \etab_t, \quad 0 \leq t \leq T,
 	\end{equation}
 	where $B_{t} = \int_0^t \beta_{s}ds$ and $\etab_t = e^{-B_{t}}\int_0^te^{B_{s}} \sqrt{2\beta_{s}}d\w_s$ is a Gaussian process independent of $\x_0$ whose covariance matrix is $(1-e^{-2B_{t}})\I$. Consequently, the covariance matrix $\bSigma_t$ of $\x_t$ is
 	\begin{equation}
 	\label{eq:covariance_Expression_forward}
 		\bSigma_t = e^{-2B_{t}}\bSigma + (1-e^{-2B_{t}})\I.
 	\end{equation}
 	where $\bSigma$ is the covariance matrix of $\x_0 \sim \pdata$.
	Futhermore, $\bSigma_t$ is invertible for $t > 0$ and verifies the matrix-valued ODE
 	\begin{equation}
 	\label{eq:ode_covariance}
 		d\bSigma_t = 2\beta_t(\I-\bSigma_t)dt
 		, \quad 0 < t \leq T.
 	\end{equation}
 \end{prop}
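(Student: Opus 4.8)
The plan is to solve \Cref{eq:sde_forward} by the classical integrating-factor (variation of constants) argument, then read off all the stated consequences from the resulting explicit formula. First I would introduce the deterministic integrating factor $e^{B_t}$ with $B_t = \int_0^t \beta_s\,ds$ and set $\z_t = e^{B_t}\x_t$. Since $t\mapsto e^{B_t}$ is $C^1$ and of bounded variation, applying Itô's formula (which here reduces to the ordinary product rule on the drift together with linearity of the stochastic integral, with no correction term because the factor is deterministic) gives $d\z_t = \beta_t e^{B_t}\x_t\,dt + e^{B_t}\,d\x_t = e^{B_t}\sqrt{2\beta_t}\,d\w_t$, so that $\z_t = \x_0 + \int_0^t e^{B_s}\sqrt{2\beta_s}\,d\w_s$. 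Multiplying back by $e^{-B_t}$ yields \Cref{eq:solution_forward_sde} with $\etab_t = e^{-B_t}\int_0^t e^{B_s}\sqrt{2\beta_s}\,d\w_s$; strong existence and uniqueness hold because the coefficients of \Cref{eq:sde_forward} are affine in $\x$ with locally bounded time-dependent coefficients.

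Next I would identify the law of $\etab_t$. It is a Wiener integral of the deterministic scalar integrand $s\mapsto e^{-B_t}e^{B_s}\sqrt{2\beta_s}$ against the $d$-dimensional Brownian motion, hence a centered Gaussian vector with i.i.d.\ coordinates; moreover it is $\sigma(\w_s,\, s\le t)$-measurable and therefore independent of $\x_0$ by the independence assumption in \Cref{eq:sde_forward}. Its covariance follows from the Itô isometry: $\Cov(\etab_t) = e^{-2B_t}\bigl(\int_0^t 2\beta_s e^{2B_s}\,ds\bigr)\I$, and since $2\beta_s e^{2B_s} = \frac{d}{ds}e^{2B_s}$ the integral equals $e^{2B_t}-1$, giving $\Cov(\etab_t) = (1-e^{-2B_t})\I$. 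That $(\etab_t)$ is a Gaussian \emph{process} follows similarly, since any finite family $(\etab_{t_1},\dots,\etab_{t_k})$ is a linear image of the Gaussian increments of $\w$.

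From here the remaining claims are immediate. Using $\x_t = e^{-B_t}\x_0 + \etab_t$ with $\etab_t$ independent of $\x_0$, bilinearity of the covariance gives $\bSigma_t = e^{-2B_t}\bSigma + (1-e^{-2B_t})\I$. For $t>0$, positivity of $\beta$ yields $B_t>0$, hence $1-e^{-2B_t}>0$; since $\bSigma$ is a covariance matrix it is positive semidefinite, so $\bSigma_t \succeq (1-e^{-2B_t})\I \succ 0$ and in particular $\bSigma_t$ is invertible — note this holds even when the data covariance $\bSigma$ is singular, which is precisely the case of interest when $\pdata$ is supported on a lower-dimensional affine subspace. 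Finally, differentiating the closed form and using $\frac{d}{dt}e^{-2B_t} = -2\beta_t e^{-2B_t}$ gives $d\bSigma_t = 2\beta_t e^{-2B_t}(\I-\bSigma)\,dt$; rewriting $\I - \bSigma_t = e^{-2B_t}(\I - \bSigma)$ turns this into $d\bSigma_t = 2\beta_t(\I-\bSigma_t)\,dt$, as claimed (alternatively, one checks this linear matrix ODE is satisfied and invokes uniqueness). None of the steps is a genuine obstacle; the only points requiring a little care are the justification that the integrating-factor manipulation needs no Itô correction (the factor being deterministic and $C^1$) and the observation that invertibility of $\bSigma_t$ for $t>0$ does not require $\bSigma$ itself to be invertible.
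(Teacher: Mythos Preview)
Your proposal is correct and follows essentially the same approach as the paper: both use the integrating factor $e^{B_t}$, set $\z_t = e^{B_t}\x_t$, reduce to a pure stochastic integral, and then read off the covariance via the It\^o isometry using $2\beta_s e^{2B_s} = \frac{d}{ds}e^{2B_s}$. Your write-up is slightly more detailed (you justify the absence of an It\^o correction, spell out the invertibility argument for $\bSigma_t$ when $t>0$, and comment on strong existence/uniqueness), but the route is the same.
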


For a general data distribution $\pdata$, solving the backward SDE is infeasible, the main reason being that the expression of the score function to integrate is unknown.
To circumvent this obstacle, we now suppose that the data distribution is Gaussian.

\begin{assumption}[Gaussian assumption]
\label{assumption:Gaussian}
	$\pdata$ \textbf{is a centered Gaussian distribution $\mathcal{N}(\zero,\bSigma)$.}
\end{assumption}

Note that $\bSigma$ may be non-invertible and thus $\pdata$ supported on a strict subspace of $\R^d$, a special case of manifold hypothesis.
Consequently, the matrix $\bSigma_{t}$ is in general only invertible for $t > 0$.
Under Gaussian assumption, $(\x_t)$ is a Gaussian process with marginal distributions $p_t = \mathcal{N}(\zero,\bSigma_t)$ and consequently the score is the linear function
\begin{equation}
	\label{eq:score_expression_Gaussian}
	\nabla \log p_t(\x) = -\bSigma_t^{-1}\x,\quad 0 < t \leq T.
\end{equation}
Note that the linearity of the diffusion score characterizes Gaussian distributions as detailed by Proposition~\ref{prop:Gaussian_linear_score} in Appendix~\ref{appendix:Gaussian_linear_score}.

The cornerstone of our work is that under Gaussian assumption we can derive an exact solution of the backward SDE, without supposing that the initial condition is Gaussian.

 \begin{prop}[Solution of the backward SDE under Gaussian assumption]
 \label{prop:solution_backward_sde_gaussian_assumption}
 	 For $\pdata = \mathcal{N}(\zero,\bSigma)$,
  the strong solution to the SDE of \Cref{eq:sde_backward}
  \begin{equation}
       d\y_t = \beta_{T-t}[\y_t + 2\nabla_\y \log p_{T-t}(\y_t)]dt + \sqrt{2\beta_{T-t}}d\w_t
  \end{equation}
  with $\y_0$ following any initial distribution can be written as
 	\begin{equation}
 	\label{eq:solution_backward_sde}
 	\y_t = e^{-(B_{T}-B_{T-t})}\bSigma_{T-t}\bSigma^{-1}_T\y_0 +\bxi_t, \quad 0 \leq t \leq T
 	\end{equation}
 	where $\bxi_t$ is a Gaussian process with covariance matrix \begin{equation}
 	    \Cov(\bxi_t) = \bSigma_{T-t}  - e^{-2(B_{T}-B_{T-t})}\bSigma^2_{T-t}\bSigma^{-1}_{T}.
 	\end{equation} Finally, if $\Cov(\y_0)$ and $\bSigma$ commute,
 	\begin{align}
 	\label{eq:covariance_backward_sde_general}
 		& \Cov(\y_t)  = \bSigma_{T-t}\notag
        \\
        & + e^{-2(B_{T}-B_{T-t})}\bSigma_{T-t}^2 \bSigma_T^{-2}\left[\Cov(\y_0)-\bSigma_T\right].
 	\end{align}
 \end{prop}

 While not as straightforward as the forward case, the proof also relies on applying the variation of constants and is given in Appendix~\ref{appendix:proof_prop:solution_backward_sde}.
 Note that if $\y_0$ is correctly initialized at $p_T$, $y_{T-t} \sim p_t$ at each time $0\leq t \leq T$. 
As shown by the following proposition (proved in Appendix~\ref{appendix:proof_prop:solution_ode_Gaussian}),
the flow ODE  also has an explicit solution under Gaussian assumption which is related to optimal transport (OT).

 \begin{prop}[Solution of the ODE probability flow under Gaussian assumption]
  \label{prop:solution_ode_Gaussian}
 For $\pdata = \mathcal{N}(\zero,\bSigma)$, the solution to the reverse-time probability flow ODE of \Cref{eq:flow_reverse_ode}
 	 \begin{equation}
 	d\y_t = \left[\beta_{T-t}\y_t +\beta_{T-t} \nabla_{\y}\log p_{T-t}(\y_t)\right]dt, \quad 0 \leq t < T
  \end{equation}
  for any $\boldsymbol{y}_0$ is
	\begin{equation}
	\y_t = \bSigma^{-1/2}_T\bSigma^{1/2}_{T-t}\y_0, \quad 0 \leq t \leq T,
    \end{equation}
  which is the application of the OT map between $p_T$ and $p_{T-t}$ to the initial condition $\boldsymbol{y}_0$. Consequently, if $\Cov(\y_0)$ and $\bSigma$ commute,
		\begin{equation}
 	\label{eq:covariance_backward_ode_general}
 		\Cov(\y_t) = \bSigma_{T-t}\bSigma^{-1}_{T}\Cov(\y_0), \quad 0 \leq t \leq T.
 	\end{equation}
 \end{prop}

\paragraph{Remark}
Here we must highlight a subtle issue: Whatever the initial distribution of $\y_0$ is, the ODE solution consists in applying the OT map between 
$p_T$ and $p_{T-t}$ at time $t$. 
If $\y_0$ follows $p_T$, then $\y_{T-t} \sim p_t$ at each time $0\leq t \leq T$. 
But since in practice one cannot truly sample $p_T$ and uses $\y_0 \sim \N$ instead,
the resulting flow is not an OT flow (even though it involves an OT mapping) and the distribution of $\y_T$ differs from $\pdata$.

For the sake of completeness, we extend Proposition~\ref{prop:solution_backward_sde_gaussian_assumption} and Proposition~\ref{prop:solution_ode_Gaussian} to the non centered case $\pdata=\mathcal{N}(\bmu,\bSigma)$ in Appendix~\ref{appendix:solutions_with_non_zero_mean}.

\paragraph{Links with related work.}
Some parts of the previous propositions have been stated in previous work.

\Cref{eq:solution_forward_sde} of Proposition~\ref{prop:solution_forward_sde} is given without proof in \citep{Convergence_Analysis_general_diffusion_models_Gao_2024_arxiv}, the variance ODE, 
that we generalize here to the full covariance matrix (\Cref{eq:ode_covariance}), is given in \citep{yang_song_score_based_sde_2021_ICLR}, \citep[Equation 6.20]{sarkka_applied_SDE_2019}), and the score expression under Gaussian assumption is reported in several recent references \cite{albergo_stochastic_interpolants_2023_arxiv,
Zach_Gaussian_miwture_diffusion_models_2024, 
Zach_explicit_diffusion_Gaussian_mixture, 
shah_learning_mixture_Gaussians_Neurips_2023}.

To the best of our knowledge Proposition 2 is new and is the cornerstone for our analytical and numerical study.

The relation between OT and probability flow ODE has been discussed in \citep{Fokker_Planck_not_0T_Lavenant_Applied_Mathematics_Letters_2022,understanding_DDPM_through_OT_Khrulkov_ICLR_2023}. \cite{Fokker_Planck_not_0T_Lavenant_Applied_Mathematics_Letters_2022} show that, in general, the flow ODE solution is not an OT between $\pdata$ and $\N$ at infinite time $T \rightarrow +\infty$, thus contradicting a conjecture of \cite{understanding_DDPM_through_OT_Khrulkov_ICLR_2023}.
Yet, they briefly discuss the Gaussian case as special case for which the conjecture is valid. Indeed, \cite{understanding_DDPM_through_OT_Khrulkov_ICLR_2023} derive the solution of the flow ODE under Gaussian assumption at infinite time horizon (Appendix B \cite{understanding_DDPM_through_OT_Khrulkov_ICLR_2023}).
More recently, an expression of the solution of the flow ODE relying on the eigendecomposition of the covariance matrix of the data in Gaussian case is given in \citep{Binxu_wang_hidden_linear_Structure_Score_based_workshop_neurips} assuming $\y_0\sim\N$.
None of these works discuss the mismatch between the OT map and the initialization of $\y_0$.
Our Proposition~\ref{prop:solution_ode_Gaussian} highlights that the generated process is not an OT flow due to the initialization error. 

\section{Exact Wasserstein Errors}
\label{sec:wasserstein_errors}

The specificity of the Gaussian case allows us to study precisely the different types of error with the expression of the explicit solution of the backward SDE. In what follows, we designate by Wasserstein distance the 2-Wasserstein distance which is known in closed forms when applied to Gaussian distributions \citep{Dowson_Landau_Frechet_distance_between_multivariate_normal_distributions_1982}.
For two centered Gaussians $\mathcal{N}(\zero,\bSigma_1)$ and $\mathcal{N}(\zero,\bSigma_2)$ such that the matrices $\bSigma_1$ and $\bSigma_2$ are simultaneously diagonalizable with respective eigenvalues $\left(\lambda_{i,1}\right)_{1\leq i \leq d},\left(\lambda_{i,2}\right)_{1\leq i \leq d},$ 
\begin{equation}
\label{eq:W2_Gaussian_commute_diago}
	\Wass(\mathcal{N}(\zero,\bSigma_1),\mathcal{N}(\zero,\bSigma_2))^2 =  \sum_{1 \leq i \leq d} (\sqrt{\lambda_{i,1}}-\sqrt{\lambda_{i,2}})^2
\end{equation}
as used in \citep{Ferradans_Static_dynamic_texture_0T_2013}. In the literature, the quality of the diffusion models is measured with FID~\citep{Heusel_etal_GANs_local_nash_equilibrium_NIPS2017}
which is the $\Wass$-error between Gaussians fitted to the Inception features~\citep{Szegedy_etal_Rethinking_the_Inception_Architecture_for_Computer_Vision_CVPR2016} of two discrete datasets.
Here we use the $\Wass$-errors directly in data space, which is more informative and allows us to provide theoretical $\Wass$-errors.
To illustrate our theoretical results, we consider the CIFAR-10 Gaussian distribution, that is, the Gaussian distribution such that $\bSigma$ is the empirical covariance of the CIFAR-10 dataset.
As shown in Appendix~\ref{appendix:Gaussian_cifar10_samples}, images produced by this model are not interesting due to a lack of structure, but the corresponding covariance has the advantage of reflecting the complexity of real data.

 \paragraph{The initialization error.}
As discussed in Sections~\ref{sec:reminders} and~\ref{sec:exact_sde_ode_solutions},
the marginals of
both generative processes  $\ygenSDE$ and $\ygenODE$ following respectively \Cref{eq:flow_ode_gen} and \Cref{eq:sde_backward_generative}
slightly differs from $p_t$ due to their common white noise initial condition.
This implies an error that we call the initialization error. The distance between $\left(\pgenSDE_t\right)_{0 \leq t \leq T}$, $\left(\pgenODE_t\right)_{0 \leq t \leq T}$ and $\left(p_t\right)_{0 \leq t \leq T}$ can be explicitly studied in the Gaussian case with the following proposition (proved in Appendix~\ref{appendix:proof_prop:marginal_generatives}).

 \begin{prop}[Marginals of the generative processes under Gaussian assumption]
 	\label{prop:marginal_generatives}
 	Under Gaussian assumption,
 	$\left(\ygenSDE_t\right)_{0 \leq t \leq T}$ and $\left(\ygenODE_t\right)_{0 \leq t \leq T}$
 	are Gaussian processes.
 	At each time $t$, $\pgenSDE_t$ is the Gaussian distribution $\mathcal{N}(\zero,\SigmagenSDE_t)$ with $\SigmagenSDE_t = \bSigma_{t}+e^{-2(B_{T}-B_t)}\bSigma^2_{t}\bSigma_T^{-1}(\bSigma^{-1}_T-\I)$ and $\pgenODE_t$ is the Gaussian distribution $\mathcal{N}(\zero,\SigmagenODE_t)$ with $\SigmagenODE_t = \bSigma^{-1}_T\bSigma_{t}$.
 	For all $0 \leq t \leq T$, the three covariance matrices $\bSigma_t$, $\SigmagenSDE_t$ and $\SigmagenODE_t$ share the same range.
 	Furthermore, for all $0 \leq t \leq T$,
 	\begin{equation}
 		\label{eq:W2_inequality}
 		\Wass(\pgenSDE_t,p_{t}) \leq \Wass(\pgenODE_t,p_{t})
 	\end{equation}
 	which shows that 
  at each time $0 \leq t \leq T$ (and in particular for $t=0$ which corresponds to the desired outputs of the sampler), the SDE sampler is a better sampler than the ODE sampler when the exact score is known.
 \end{prop}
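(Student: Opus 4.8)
The plan is to collapse the whole statement to a scalar, eigenvalue-by-eigenvalue comparison, using that under \Cref{assumption:Gaussian} every covariance in sight is a polynomial in $\bSigma$. For Gaussianity: since the score \Cref{eq:score_expression_Gaussian} is linear, \Cref{eq:sde_backward_generative} and \Cref{eq:flow_ode_gen} are linear (S)DEs with Gaussian initialization $\N$, so $(\ygenSDE_t)$ and $(\ygenODE_t)$ are Gaussian processes; equivalently this is read off the explicit solutions, $\ygenSDE_t$ being an affine function of $\y_0\sim\N$ plus the independent Gaussian $\bxi_t$ of \Cref{prop:solution_backward_sde_gaussian_assumption}, and $\ygenODE_t=\bSigma_T^{-1/2}\bSigma_{T-t}^{1/2}\y_0$ a linear image of a Gaussian by \Cref{prop:solution_ode_Gaussian}. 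Taking covariances with $\Cov(\y_0)=\I$ — which commutes with $\bSigma$, so that \Cref{eq:covariance_backward_sde_general} and \Cref{eq:covariance_backward_ode_general} apply — and evaluating the resulting process covariances at time $T-t$ (since $\pgenSDE_t=\qgenSDE_{T-t}$ and $\pgenODE_t=\qgenODE_{T-t}$) gives, after the simplification $\bSigma_T^{-2}(\I-\bSigma_T)=\bSigma_T^{-1}(\bSigma_T^{-1}-\I)$, the announced $\SigmagenSDE_t=\bSigma_t+e^{-2(B_T-B_t)}\bSigma_t^2\bSigma_T^{-1}(\bSigma_T^{-1}-\I)$ and $\SigmagenODE_t=\bSigma_T^{-1}\bSigma_t$.

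Next, by \Cref{eq:covariance_Expression_forward} every $\bSigma_\tau$ is a polynomial in $\bSigma$, so $\bSigma_t$, $\bSigma_T$, $\SigmagenSDE_t$, $\SigmagenODE_t$ are simultaneously diagonalizable; I fix an eigenbasis of $\bSigma$ and argue on a fixed eigenvalue $\lambda\ge 0$. Write $\alpha_\tau=e^{-2B_\tau}$, $c=\alpha_T/\alpha_t=e^{-2(B_T-B_t)}$, $p=\alpha_t\lambda+1-\alpha_t$ (the corresponding eigenvalue of $\bSigma_t$, i.e.\ of $p_t$) and $r=\alpha_T\lambda+1-\alpha_T$ (that of $\bSigma_T$); since $B$ is nondecreasing and finite, $0<c\le 1$ and $r>0$. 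The algebraic key, checked by direct substitution, is $1-r=c(1-p)$, i.e.\ $r=1-c+cp$; the eigenvalues of $\SigmagenODE_t$ and $\SigmagenSDE_t$ are then $s_{\mathrm{ODE}}=p/r$ and $s_{\mathrm{SDE}}=p\bigl(1+c^2p(1-p)/r^2\bigr)$, and one gets the telescoping identity
\[
 s_{\mathrm{ODE}}-s_{\mathrm{SDE}}=\frac{p}{r^2}\Bigl(r(1-r)-c^2p(1-p)\Bigr)=\frac{p}{r^2}\,c(1-p)(1-c).
\]

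Two short sign discussions finish the proof. For the range claim, the identity together with $s_{\mathrm{ODE}}=p/r$ shows $s_{\mathrm{SDE}}$ and $s_{\mathrm{ODE}}$ vanish exactly when $p$ does; since simultaneously diagonalizable symmetric matrices share their range precisely when they share their pattern of zero eigenvalues, $\bSigma_t$, $\SigmagenSDE_t$, $\SigmagenODE_t$ have the same range (all three being invertible once $t>0$, where $p>0$). For \Cref{eq:W2_inequality}, the closed form \Cref{eq:W2_Gaussian_commute_diago} reduces it to $|\sqrt{s_{\mathrm{SDE}}}-\sqrt p|\le|\sqrt{s_{\mathrm{ODE}}}-\sqrt p|$ per eigenvalue: if $\lambda<1$ then $1-p>0$ and the identity yields $p\le s_{\mathrm{SDE}}\le s_{\mathrm{ODE}}$, if $\lambda>1$ then $1-p<0$ and it yields $0<s_{\mathrm{ODE}}\le s_{\mathrm{SDE}}\le p$, and if $\lambda=1$ all three are equal, so in every case $\sqrt{s_{\mathrm{SDE}}}$ lies between $\sqrt p$ and $\sqrt{s_{\mathrm{ODE}}}$ and the per-eigenvalue bound holds. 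Summing over eigenvalues gives \Cref{eq:W2_inequality}, and $t=0$ is the stated comparison of the two samplers' outputs. The only step I expect to require genuine care is spotting the substitution $r=1-c+cp$ (equivalently $1-r=c(1-p)$) that turns $s_{\mathrm{ODE}}-s_{\mathrm{SDE}}$ into the transparently-signed $\tfrac{p}{r^2}c(1-p)(1-c)$; after that, everything is bookkeeping driven by the monotonicity $B_t\le B_T$.
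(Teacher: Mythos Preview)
Your proof is correct and follows essentially the same strategy as the paper: reduce to the common eigenbasis of $\bSigma$, write down the eigenvalues of the three covariances, and do a case split on $\lambda\lessgtr 1$ to compare the per-eigenvalue Wasserstein contributions. Your factorization $s_{\mathrm{ODE}}-s_{\mathrm{SDE}}=\tfrac{p}{r^2}\,c(1-p)(1-c)$ is a slightly cleaner packaging of what the paper obtains via the ratio $e^{-2(B_T-B_t)}\lambda_{i,t}/\lambda_{i,T}=cp/r<1$, and you also address the shared-range claim explicitly, which the paper's proof leaves implicit.
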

 In practice the initialization error for the SDE and ODE samplers may vary by several orders of magnitude, as shown for the CIFAR-10 example in Figure~\ref{fig:cifar_measures}.(a) (solid lines) which illustrates \Cref{eq:W2_inequality}.

 \paragraph{The discretization error.} The implementation of the SDE and the ODE implies choosing a discrete numerical scheme. We propose to study three different schemes for the SDE and the ODE, presented in Table~\ref{tab:discretization_schemes} in Appendix~\ref{appendix:table_numerical_schemes}. The classical Euler-Maruyama (EM) is used in \citep{yang_song_score_based_sde_2021_ICLR} and the exponential integrator (EI) in \citep{de_bortoli_convergence_diffusion_manifold_hypothesis_2022_TMLR} to sample from the SDE of \Cref{eq:sde_backward_generative}. The Discrete Denoising Diffusion Probabilistic Model (DDPM) \cite{ho_ddpm_2020_neurips} can be interpreted as a discretization of the backward SDE as detailed in \citep[Appendix E]{yang_song_score_based_sde_2021_ICLR}. 
Euler method, Heun's scheme and Runge-Kutta 4 (RK4) are Runge-Kutta methods with respective orders 1,2,4.  Heun's scheme is recommended in \citep{Karras_Elucidating_desing_diffusion_neurips_2022} to model the ODE of \Cref{eq:flow_ode_gen}. Under the Gaussian assumption, the discretized processes obtained using these six schemes remain Gaussian processes, and the eigenvalues of their covariance matrix can be computed numerically and recursively in order to evaluate the Wasserstein distance.
 Indeed, under the Gaussian assumption, the score is a linear operator and the discrete schemes lead to linear operations described in Table~\ref{tab:discretization_schemes} in Appendix~\ref{appendix:table_numerical_schemes}.
 Then, a Gaussian initialization for $\y_0$ provides a sequence of centered Gaussian processes $(y^{\Delta,\cdot}_k)_k$ and if $\y_0$ follows $p_T$ or $\N$, the covariance matrix 
 $\Cov(y^{\Delta,\cdot}_k)$ 
 admits the same eigenvectors as $\bSigma$ and we can use \Cref{eq:W2_Gaussian_commute_diago} to compute Wasserstein distances. 
 Let us illustrate the computation of the eigenvalues with the EM scheme.
 Denoting $\left(\lambda_{i,t}\right)_{1 \leq i \leq d}$ the eigenvalues of $\bSigma_t$ and $\big(\lambda_{i,k}^{\Delta,\text{EM}} \big)_{1\leq i \leq d}$ the eigenvalues of the covariance matrix of the Euler-Maruyama discretization of the SDE at the $k$th step, $1 \leq k \leq N-1$, the relation verified by these eigenvalues is
 \begin{align}
 	\label{eq:EM_eigenvalues}
    \lambda_{i,k+1}^{\Delta,\text{EM}}  =  & \big(1+\Delta_t \beta_{T-t_k}  (1-\tfrac{2}{\lambda_{i,T-t_k}})\big)^2\lambda_{i,k}^{\Delta,\text{EM}} \notag \\ &+ 2\Delta_t \beta_{T-t_k},
    1 \leq i \leq d, 0 \leq k \leq N-2
 \end{align}
 with initialization $\lambda_{i,0}^{\Delta,\text{EM}} = \left\{
    \begin{array}{ll}
        1 & \mbox{if $\y_T \sim \N$ }  \\
        \lambda_{i,T} & \mbox{if $\y_T\sim p_T$ }  \\
    \end{array}
\right.$. 
More detailed computations for EM and formulas for other schemes are presented in Appendix~\ref{appendix:computation_W2_errors}.
 For each scheme, we recursively compute the eigenvalues at each time discretization and present the observed Wasserstein distance in Figure~\ref{fig:cifar_measures}.(a). The schemes are compared at a fixed Number of score Function Evaluation (NFE). 
 We can observe that RK4 and Heun's methods provide the lower Wasserstein distance, followed by EM, EI and the Euler scheme. Note that the discrete schemes does not preserve the range of the covariance matrix, contrary to the continuous formulas. This explains the fact that the Wasserstein distance increases at the final step.

\paragraph{The truncation error.}  As discussed in \citep{yang_song_score_based_sde_2021_ICLR}, it is preferable to study the backward process on $[\varepsilon,T]$ instead of $[0,T]$ because the score is a priori not defined for $t=0$, which occurs in our case if $\bSigma$ is not invertible.
This approximation is called the truncation error. As a consequence, even without initialization error, the backward process leads to $p_{\varepsilon}$ and not $p_0$. 
Under Gaussian assumption, it is possible to explicit this error with the expression given in Proposition~\ref{prop:solution_ode_Gaussian} and~\ref{prop:solution_backward_sde_gaussian_assumption} as done in Figure~\ref{fig:cifar_measures}.(b) for both continuous and numerical solutions. 
For the standard practice truncation time $\varepsilon = 10^{-3}$ \citep{yang_song_score_based_sde_2021_ICLR,Karras_Elucidating_desing_diffusion_neurips_2022}, all numerical schemes have an error close to the corresponding continuous solution.
Using a lower $\varepsilon$ value is only relevant for the continuous SDE solution.

\setlength{\tabcolsep}{8pt} 

\begin{table}
\centering
\footnotesize
\begin{tabular}{llllllll} 
\toprule
&& \multicolumn{2}{c}{NFE = $500$}
& \multicolumn{2}{c}{NFE = $1000$}
\\
\cmidrule(lr){3-4}
\cmidrule(lr){5-6}
\cmidrule(lr){7-8}
& &
$p_T$ & $\mathcal{N}_0$ &
$p_T$ & $\mathcal{N}_0$ \\
\midrule
\multirow{2}{*}{EM}
& \multicolumn{1}{|l}{$\varepsilon = 0$}
 & 0.32
 & 0.32
 & 0.16
 & 0.16
\\
& \multicolumn{1}{|l}{$\varepsilon = 10^{-3}$}
 & 0.40
 & 0.40
 & 0.27
 & 0.27
\\
& \vspace{-.3cm} & & & & \\
\multirow{2}{*}{EI}
& \multicolumn{1}{|l}{$\varepsilon = 0$}
 & 0.30
 & 0.30
 & 0.16
 & 0.16
\\
& \multicolumn{1}{|l}{$\varepsilon = 10^{-3}$}
 & 0.41
 & 0.41
 & 0.29
 & 0.29
\\
& \vspace{-.3cm} & & & & \\
\multirow{2}{*}{DDPM}
& \multicolumn{1}{|l}{$\varepsilon = 0$}
 & 0.47
 & 0.47
 & 0.23
 & 0.23
\\
& \multicolumn{1}{|l}{$\varepsilon = 10^{-3}$}
 & 0.53
 & 0.53
 & 0.32
 & 0.32
\\
& \vspace{-.3cm} & & & & \\
\multirow{2}{*}{Euler}
& \multicolumn{1}{|l}{$\varepsilon = 0$}
 & 0.20
 & 0.26
 & 0.10
 & 0.17
\\
& \multicolumn{1}{|l}{$\varepsilon = 10^{-3}$}
 & 0.27
 & 0.32
 & 0.21
 & 0.25
\\
& \vspace{-.3cm} & & & & \\
\multirow{2}{*}{Heun}
& \multicolumn{1}{|l}{$\varepsilon = 0$}
 & -
 & -
 & -
 & -
\\
& \multicolumn{1}{|l}{$\varepsilon = 10^{-3}$}
 & 0.16
 & 0.18
 & 0.17
 & 0.19
\\
& \vspace{-.3cm} & & & & \\
\multirow{2}{*}{RK4}
& \multicolumn{1}{|l}{$\varepsilon = 0$}
 & -
 & -
 & -
 & -
\\
& \multicolumn{1}{|l}{$\varepsilon = 10^{-3}$}
 & 0.15
 & 0.17
 & 0.17
 & 0.19
\\
& \vspace{-.3cm} & & & & \\
\bottomrule
\end{tabular}
\caption{\small \label{tab:ablation_study_cifar10_extract} \textbf{Ablation study of Wasserstein errors for the CIFAR-10 Gaussian.} This table is extracted from Table~\ref{tab:ablation_study_cifar10} in Appendix~\ref{appendix:ablation_study_cifar10}. For a given discretization scheme, the table presents the Wasserstein distance associated with the truncation error for different values of $\varepsilon$. The columns $p_T$ and $\N$ show the influence of the initialization error. The columns provide the Wasserstein errors for each scheme for a given number of score function evaluation NFE. }
\end{table}

\paragraph{Ablation study.} We propose in \Cref{tab:ablation_study_cifar10_extract} an ablation study to monitor the magnitude of each error and their accumulation for various sampling schemes for the CIFAR-10 example.
In accordance with Proposition~\ref{prop:marginal_generatives}, 
the initialization error influences the ODE schemes, while the SDE schemes are not affected.
Schemes having a sufficient number of steps are not sensitive to the truncation error for $\varepsilon < 10^{-3}$, except Heun's scheme and RK4, which are unstable near to origin. 
The discretization error is the more important approximation but it becomes very low for a sufficient number of steps for stochastic schemes.  In a realistic setting, where $p_T$ is unknown, and with a truncation time $\varepsilon$, the lower Wasserstein error is provided by RK4 and Heun's method with $500$ NFE, with the classical choice $\varepsilon = 10^{-3}$. 
As in \cite{Karras_Elucidating_desing_diffusion_neurips_2022}, our conclusions lead to the choice of Heun's scheme as the go-to method.

\newlength{\wrtlambwidth}
\setlength{\wrtlambwidth}{.5\textwidth}
\setlength{\tabcolsep}{1pt} 

\setlength{\tabcolsep}{1pt} 
\begin{figure*}
          \centering
          \begin{tabular}{cc}
          \includegraphics[width = \wrtlambwidth]{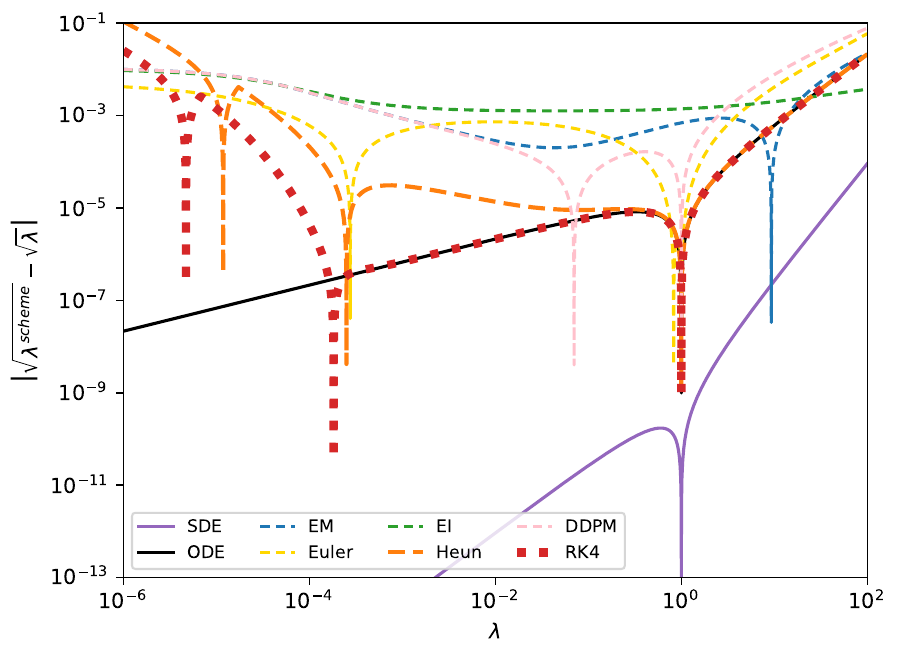}
          & \includegraphics[width = \wrtlambwidth]{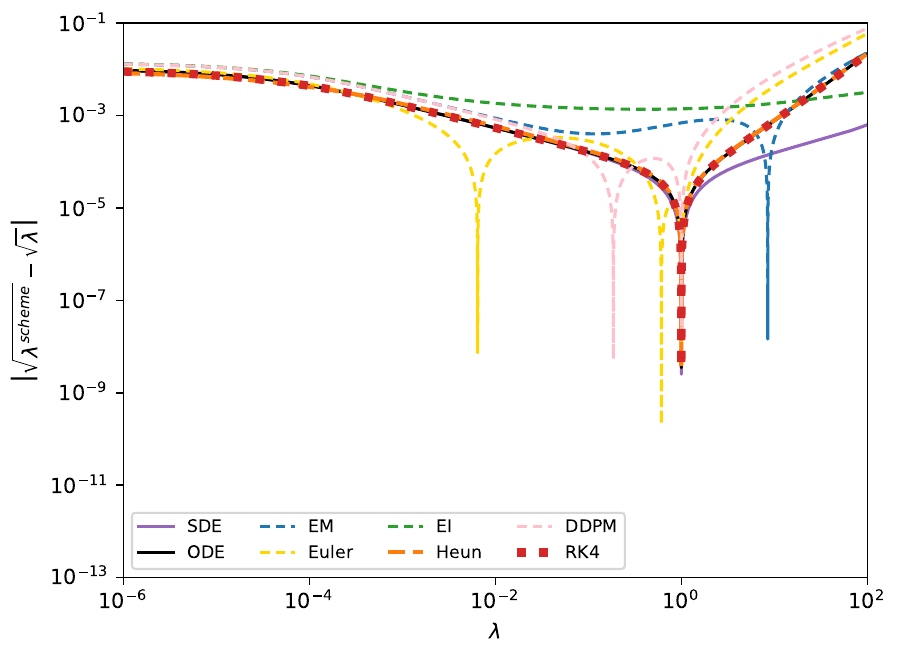} \\
          (a) Initialization error at final time
          & (b) Truncation error at final time for $\varepsilon = 10^{-3}$
          \end{tabular}
 \caption{\small \label{fig:error_wrt_lamb}
 \textbf{Eigenvalue contribution to the Wasserstein error.} The magnitude of the Wasserstein error is influenced by the eigenvalues of the covariance of the Gaussian distribution.
Left: Contribution to the Wasserstein error for the continuous equations and the discretization schemes with standard initialization $\N$.
Right: Same plot when using a truncation time $\varepsilon = 10^{-3}$.
All schemes use $1000$ NFE.
While we prove that the continuous SDE is always better than the continuous ODE (Proposition~\ref{prop:marginal_generatives}), it is not the same for the discrete schemes.
With a truncation time $\varepsilon=10^{-3}$ (see (b)), RK4 and Heun's method are nearly as good as the continuous ODE solution for high eigenvalues, which shows they are well-adapted to any Gaussian distribution.}
\end{figure*}

\paragraph{Influence of eigenvalues.}
The above observations and conclusions are observed on the CIFAR-10 Gaussian.
However, in general, they depend on the eigenvalues of the covariance matrix $\bSigma$.
Indeed, as seen in \Cref{eq:W2_Gaussian_commute_diago}, the Wasserstein distance is separable and each eigenvalue contributes to increase it.
In Figure~\ref{fig:error_wrt_lamb}, we evaluate the contribution of each eigenvalue by plotting $\lambda \mapsto |\sqrt{\lambda} - \sqrt{\lambda^\text{scheme}}|$ for each scheme.
Figure~\ref{fig:error_wrt_lamb}.(a) demonstrates that for continuous equations, the error increases with the eigenvalues, except for a strong decrease for $\lambda = 1$.
Besides, as proved in the proof of Proposition~\ref{prop:marginal_generatives} (see Appendix~\ref{appendix:proof_prop:marginal_generatives}), the error for the SDE is always lower than the error for the ODE and we can observe how tight is \Cref{eq:W2_inequality}.
Unfortunately, once discretized, the stochastic schemes are not as good as the continuous solution. We can note that all schemes exhibit a strong decrease for $\lambda=1$, except EM and EI. The other peaks depend on the choice of the parameterization of $\beta$.
The EI scheme is the most stable across the range of eigenvalues, but in the end, it is generally more costly than the others in terms of Wasserstein error. DDPM seems to be the best choice of stochastic scheme for low eigenvalues but fails for higher ones, which is largely penalized in our Gaussian CIFAR-10 example (see Figure~\ref{fig:cifar_measures}).
Without truncation time, Heun's method and RK4 fail for low eigenvalues because $\bSigma$ is not stably invertible.
However, as seen in Figure~\ref{fig:error_wrt_lamb}.(b), with a truncation time $\varepsilon=10^{-3}$, they are very close to the continuous ODE solution.
This shows that for any Gaussian distribution, these methods introduce almost no additional discretization error. Due to its simplicity, Heun's method is the preferred choice in practice.
Our code allows for the evaluation of any covariance matrix and the computation of Figure~\ref{fig:cifar_measures} and Table~\ref{tab:ablation_study_cifar10_extract} (provided the eigenvalues can be computed, see the supplementary).

\section{Numerical Study of the Score Approximation}
\label{sec:score_approximation_adsn}

So far our theoretical and numerical study has been conducted under the hypothesis that the score function is known, thus discarding the evaluation of the score approximation.
In practice, for general data distribution, the score function is parameterized by a neural network trained using denoising score-matching.
This learned score function is not perfect and while theoretical studies assume the network to be close to the theoretical one (with uniform or adaptative bounds, see the discussion in \citep{de_bortoli_convergence_diffusion_manifold_hypothesis_2022_TMLR}),
such an hypothesis is hard to check in practice, especially in our non compact setting.
Thus, we propose in this section to train a diffusion models on a Gaussian distribution and evaluate numerically the impact of the score approximation.

\paragraph{The Gaussian ADSN distribution for microtextures.} So far our running example was the CIFAR-10 Gaussian but we will now turn to another example that produces visually interesting images, namely Gaussian microtextures.
We consider the asymptotic discrete spot noise (ADSN) distribution~\citep{Galerne_Gousseau_Morel_random_phase_textures_2011} associated with an RGB texture $\U\in\R^{3\times \OMN}$ which is defined as the stationary Gaussian distribution whose covariance equals the autocorrelation of $\U$.
More precisely, this distribution is sampled using convolution with a white Gaussian noise~\citep{Galerne_Gousseau_Morel_random_phase_textures_2011}:
Denoting $m\in\R^3$ the channelwise mean of $\U$ and $\bt_c = \frac{1}{\sqrt{MN}}(\U_c-m_c)$, $1 \leq c \leq 3$, its associated \emph{texton}, for $\w \sim \N$ of size $M \times N$
the channelwise convolution $\x = m + \bt \star \w \in \R^{3 \times \OMN}$ follows $\ADSN(\U)$. This distribution is the Gaussian $\mathcal{N}(m,\bSigma)$. 
To deal with zero mean Gaussian, adding the mean $m$ is considered as a post-processing to visualize samples and we study $\mathcal{N}(\zero,\bSigma)$. 
The matrix $\bSigma$ is a well-known convolution matrix \citep{Ferradans_Static_dynamic_texture_0T_2013}, its eigenvectors and associated eigenvalues can be computed in the Fourier domain, as done in Appendix~\ref{appendix:covariance_ADSN_eigen}. 
$\bSigma$ admits the eigenvalues $\lambda^{\xi,\ADSN}_1 = |\widehat{\bt}_1|^2(\xi)+|\widehat{\bt}_2|^2(\xi)+|\widehat{\bt}_3|^2(\xi), \xi \in \OMN$ and $0$ with multiplicity $2MN$ and we can conduct the same analysis as before (see Appendix~\ref{appendix:ADSN_error}). 
To evaluate if a set of $N_\text{samples}$ sampled images is close to the ADSN distribution $\pdata$, we evaluate a problem-specific empirical Wasserstein distance: Supposing that the $N_\text{samples}$ are drawn from a Gaussian distribution $\pemp = \mathcal{N}(\zero,\G)$ such that $\G$ admits the same eigenvectors as $\bSigma$, we compute
\begin{align}
	\label{eq:W2emp}
	\Wemp(\pemp,\pdata)^2 = & \sum_{\xi \in \OMN}\left(\sqrt{\lambda^{\xi,\text{emp.}}_1}-\sqrt{\lambda^{\xi,\ADSN}_1}\right)^2 \notag \\
    & + \lambda^{\xi,\text{emp.}}_2 + \lambda^{\xi,\text{emp.}}_3
\end{align}
where $(\lambda^{\xi,\text{emp.}}_i)_{\xi \in \OMN,1 \leq i \leq 3}$ are estimators of the eigenvalues of $\G$ given in Appendix~\ref{appendix:covariance_ADSN_W2}.
\newlength{\samplewidth}
\setlength{\samplewidth}{.14\textwidth}
\setlength{\tabcolsep}{1pt} 
\begin{figure}
	\centering
	\small
	\begin{tabular}{cccc}
	\vspace{0.003\textwidth}
	\parbox[t]{3mm}{\rotatebox[origin=c]{90}{Original $\U$}}   & \raisebox{-.5\height}{\includegraphics[width=\samplewidth]{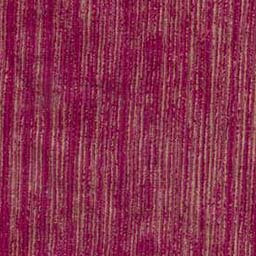}} & &     \\
	\vspace{0.003\textwidth}
	\parbox[t]{3mm}{\rotatebox[origin=c]{90}{ADSN $\quad$}} &
	\raisebox{-.5\height}{\includegraphics[width=\samplewidth]{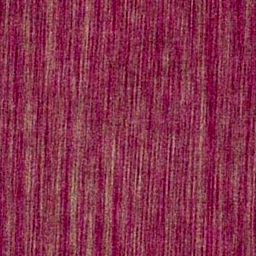}} &
	\raisebox{-.5\height}{\includegraphics[width=\samplewidth]{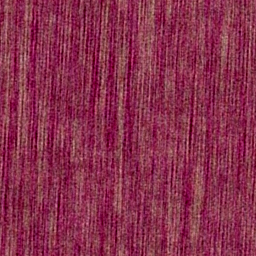}} &
	\raisebox{-.5\height}{\includegraphics[width=\samplewidth]{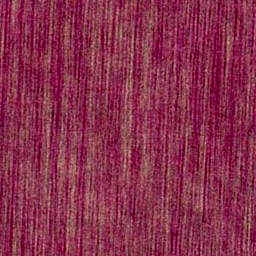}}  \\
	\vspace{0.003\textwidth}
	\parbox[t]{4mm}{\rotatebox[origin=c]{90}{$p_\theta^\text{EM}$ $\quad$}} &
	\raisebox{-.5\height}{\includegraphics[width=\samplewidth]{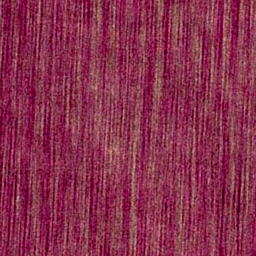}} &
	\raisebox{-.5\height}{\includegraphics[width=\samplewidth]{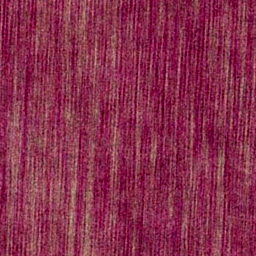}} &
	\raisebox{-.5\height}{\includegraphics[width=\samplewidth]{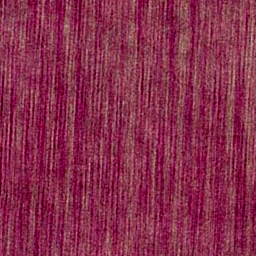}}  \\
	\vspace{0.003\textwidth}
	\parbox[t]{4mm}{\rotatebox[origin=c]{90}{$p_\theta^\text{Heun}$ $\quad$}} &
	\raisebox{-.5\height}{\includegraphics[width=\samplewidth]{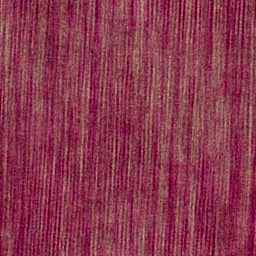}} &
	\raisebox{-.5\height}{\includegraphics[width=\samplewidth]{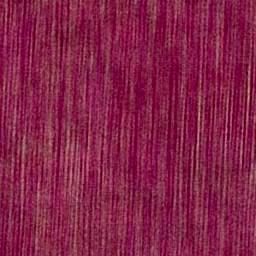}} &
	\raisebox{-.5\height}{\includegraphics[width=\samplewidth]{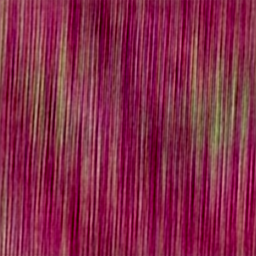}}   \\
	\end{tabular}
\caption{\small \label{fig:samples} \textbf{Texture samples generated with the learned score.}
First row: original image $\U$.
Second row: three samples of $\ADSN(\U)$ 
Third and fourth row: Samples generated with the learned score with EM and Heun's discretization schemes.
While both schemes use the same learned score function, the generation with Heun's scheme can produce out-of-distribution samples, as seen with the third sample.}
\end{figure}

\setlength{\tabcolsep}{6pt} 
\begin{table*}[h]
\centering
{
\begin{tabular}{@{}lccccc@{}}
\toprule
 & \multicolumn{3}{c}{Exact score distribution} & \multicolumn{2}{c}{Learned score distribution} \\
\cmidrule(lr){2-4}
\cmidrule(lr){5-6}
 $p$ &  $\Wass$($p$,$\pdata $) $\downarrow$ &  $\Wemp$($\pemp$,$\pdata $) $\downarrow$ & FID($\pemp$,$\pdataemp$) $\downarrow$ & $\Wemp$($\pemp_\theta$,$\pdataemp $) $\downarrow$ &FID($\pemp_\theta$,$\pdataemp$) $\downarrow$  \\
\midrule
EM & 5.16 &5.1630$\pm$7E-5  & 0.0891$\pm$8E-4   & 15.6 & \textcolor{white}{0}1.02 \\
Heun & 3.73 &3.7323$\pm$2E-4 & 0.0447$\pm$6E-4  & 56.7 & 19.4\textcolor{white}{8}\\
\bottomrule
\end{tabular}}
\caption{\footnotesize \label{tab:W2_FID} \textbf{Numerical evaluation of the score approximation for a Gaussian microtexture model.} For two schemes, the EM discretization of the backward SDE and Heun's method associated with the flow ODE, the table shows the Wasserstein distance and FID for theoretical and learned distributions. The theoretical $\Wass$ value is computed with explicit formulas, as done in Table~\ref{tab:ablation_study_ADSN}. The FID and empirical $\Wass$ w.r.t the theoretical distribution are computed on $25$ samplings of 50K images while only one sampling of 50K images is drawn for the parametric distributions (to limit computation time).}
\end{table*}
\paragraph{Learning the score function.} We train the network using the code\footnote{Code available at \url{https://github.com/yang-song/score_sde_pytorch}} associated with the paper \cite{yang_song_score_based_sde_2021_ICLR}. We choose the architecture of DDPM, which is a U-Net described in \cite{ho_ddpm_2020_neurips}, with the parameters proposed for the dataset CelebaHQ256 to deal with the 256$\times$256 ADSN model associated with the top-left image of Figure~\ref{fig:samples}. We use the training procedure corresponding to \emph{DDPM cont.} in \cite{yang_song_score_based_sde_2021_ICLR}. 
$\beta$ is linear from 0.05 to 10 with $T=1$. 
We train over 1.3M iterations, and we generate at each iteration a new batch of ADSN samples.
We implement the stochastic EM and deterministic Heun schemes replacing the exact score by its learned version with $N = 1000$ steps and a truncation time $\varepsilon = 10^{-3}$. We name $p_\theta^{\text{EM}}$ and $p_\theta^{\text{Heun}}$, the corresponding distributions and present samples in Figure~\ref{fig:samples}. 
Both distributions accumulate the four error types.

\paragraph{Evaluation of the score approximation.} It is not possible to compute theoretically the Wasserstein distance between $\pdata = \ADSN(\U)$ and $p_\theta^{\text{EM}},p_\theta^{\text{Heun}}$ due to the non-linearity of the learned score. To compute an empirical Wasserstein error between it, we use \Cref{eq:W2emp}. 
Let us clarify that this approximation underestimates the real Wasserstein distance since it wrongly assumes that the distributions $p_\theta^{\text{EM}}$, $p_\theta^{\text{Heun}}$ are Gaussian with a covariance matrix diagonalizable in the same basis than the covariance matrix $\bSigma$ of $\ADSN(\U)$.
We complete this dedicated empirical measure with the standard FID.
These metrics are reported in Table~\ref{tab:W2_FID} where for theoretical distributions that are fast to sample we add the standard deviations computed on $25$ different $50k$-samplings. 
For this Gaussian distribution, the score approximation is by far the most impactful source of error, which is in accordance with previous works~\cite{chen_sampling_as_easy_as_learning_Score_ICLR_2023,de_bortoli_et_al_diffusion_schrodinger_bridge_2021_neurips}.
We observe that the stochastic EM sampling is more resilient to score approximation than the deterministic Heun's scheme, resulting in out-of-distribution samples (Figure~\ref{fig:samples}).
We may explain this behavior by recalling the results of Proposition~\ref{prop:marginal_generatives} that shows that SDE solutions are less sensitive to initialization errors than ODE.
Indeed, adding noise at each iteration tends to mitigate the accumulated errors, and score approximation may be considered as some initialization error occurring at each step.

\section{Discussion and Limitations}
\label{sec:discussion_limitations}
The main limitation of our work is that our theoretical and numerical results are limited to Gaussian distributions.
Resorting to diffusion models for sampling Gaussian distributions is not necessary in practice, rather we use Gaussian distributions as a test case family to provide insight on diffusion models.

A natural extension of this work is to compute error types for more complex distributions (e.g multimodal) such as Gaussian mixtures models (GMM). 
However, generalizing our results for these more complex distributions one faces three main difficulties. 
First, to the best of our knowledge, we are unable to derive exact solutions to the backward SDE or the flow ODE under GMM assumption, even though the score has a known analytical expression \citep{Zach_Gaussian_miwture_diffusion_models_2024, Zach_explicit_diffusion_Gaussian_mixture, shah_learning_mixture_Gaussians_Neurips_2023}.
Another key feature of this study is to evaluate exactly the Wasserstein error by using \Cref{eq:W2_Gaussian_commute_diago}, strongly relying on the Gaussian assumption. 
A closed-form of the Wasserstein distance between two GMMs is not known, 
leading to alternative distance definitions for such models~\citep{delon_desolneux_wasserstein_type_distance_GMM}. 
Furthermore, there is no guarantee that the processes obtained by discretizing the backward SDE follow a known distribution such as a GMM.
Hence, to compare the distributions generated in practice with exact solutions of time continuous equations under GMM assumption, as we do for the Gaussian case, one should solve open theoretical problems.

\section{Conclusion}

By restricting the analysis of diffusion models to the specific case of Gaussian distributions, we were able to derive exact solutions for both the backward SDE and its associated probability flow ODE.
We demonstrated that regarding the initialization error, the SDE sampler is more resilient than the ODE sampler for Gaussian distributions.
Additionally, we characterized the discrete Gaussian processes arising when discretizing these equations.
This allowed us to provide exact Wasserstein errors for the initialization error, the discretization error, and the truncation error as well as any of their combinations.
This theoretical analysis led to conclude that Heun's scheme is the best numerical solution, in accordance with empirical previous work~\citep{Karras_Elucidating_desing_diffusion_neurips_2022}.

To conclude our work we conducted an empirical analysis with a learned score function using standard architecture which showed that the score approximation error may be the most important one in practice and that this error has more impact on the deterministic Heun's scheme than the stochastic Euler-Maruyama scheme. 
This suggests that assessing the quality of learned score functions is an important research direction for future work.

\section*{Impact Statement}

This paper presents work whose goal is to advance the field of Machine Learning. There are many potential societal consequences of our work, none which we feel must be specifically highlighted here.

 \bigskip
 {\footnotesize \noindent\textbf{Acknowledgements:} The authors acknowledge the support of the project MISTIC (ANR-19-CE40-005) and the CaSciModOT Centre de Calcul Scientifique de la Région Centre Val de Loire for providing computer facilities.}

\bibliography{Diffusion}

\begin{thebibliography}{34}
\providecommand{\natexlab}[1]{#1}
\providecommand{\url}[1]{\texttt{#1}}
\expandafter\ifx\csname urlstyle\endcsname\relax
  \providecommand{\doi}[1]{doi: #1}\else
  \providecommand{\doi}{doi: \begingroup \urlstyle{rm}\Url}\fi

\bibitem[Albergo et~al.(2023)Albergo, Boffi, and Vanden-Eijnden]{albergo_stochastic_interpolants_2023_arxiv}
Albergo, M.~S., Boffi, N.~M., and Vanden-Eijnden, E.
\newblock Stochastic interpolants: A unifying framework for flows and diffusions, 2023.
\newblock URL \url{https://arxiv.org/abs/2303.08797}.

\bibitem[Benton et~al.(2024)Benton, Bortoli, Doucet, and Deligiannidis]{benton_nearly_d_linear_convergence_bounds_ICLR_2024}
Benton, J., Bortoli, V.~D., Doucet, A., and Deligiannidis, G.
\newblock Nearly \$d\$-linear convergence bounds for diffusion models via stochastic localization.
\newblock In \emph{The Twelfth International Conference on Learning Representations}, 2024.
\newblock URL \url{https://openreview.net/forum?id=r5njV3BsuD}.

\bibitem[Chen et~al.(2023{\natexlab{a}})Chen, Huang, Zhao, and Wang]{Chen_Score_approx_diff_model_low_dim_ICLR_2023}
Chen, M., Huang, K., Zhao, T., and Wang, M.
\newblock Score approximation, estimation and distribution recovery of diffusion models on low-dimensional data.
\newblock In Krause, A., Brunskill, E., Cho, K., Engelhardt, B., Sabato, S., and Scarlett, J. (eds.), \emph{Proceedings of the 40th International Conference on Machine Learning}, volume 202 of \emph{Proceedings of Machine Learning Research}, pp.\  4672--4712. PMLR, 23--29 Jul 2023{\natexlab{a}}.
\newblock URL \url{https://proceedings.mlr.press/v202/chen23o.html}.

\bibitem[Chen et~al.(2023{\natexlab{b}})Chen, Chewi, Lee, Li, Lu, and Salim]{chen_flow_ode_provably_fast_2023_ICLR}
Chen, S., Chewi, S., Lee, H., Li, Y., Lu, J., and Salim, A.
\newblock The probability flow {ODE} is provably fast.
\newblock In \emph{Thirty-seventh Conference on Neural Information Processing Systems}, 2023{\natexlab{b}}.
\newblock URL \url{https://openreview.net/forum?id=KD6MFeWSAd}.

\bibitem[Chen et~al.(2023{\natexlab{c}})Chen, Chewi, Li, Li, Salim, and Zhang]{chen_sampling_as_easy_as_learning_Score_ICLR_2023}
Chen, S., Chewi, S., Li, J., Li, Y., Salim, A., and Zhang, A.
\newblock Sampling is as easy as learning the score: theory for diffusion models with minimal data assumptions.
\newblock In \emph{The Eleventh International Conference on Learning Representations}, 2023{\natexlab{c}}.
\newblock URL \url{https://openreview.net/forum?id=zyLVMgsZ0U_}.

\bibitem[De~Bortoli(2022)]{de_bortoli_convergence_diffusion_manifold_hypothesis_2022_TMLR}
De~Bortoli, V.
\newblock Convergence of denoising diffusion models under the manifold hypothesis.
\newblock \emph{Transactions on Machine Learning Research}, 2022.
\newblock ISSN 2835-8856.
\newblock URL \url{https://openreview.net/forum?id=MhK5aXo3gB}.

\bibitem[De~Bortoli et~al.(2021)De~Bortoli, Thornton, Heng, and Doucet]{de_bortoli_et_al_diffusion_schrodinger_bridge_2021_neurips}
De~Bortoli, V., Thornton, J., Heng, J., and Doucet, A.
\newblock Diffusion schrödinger bridge with applications to score-based generative modeling.
\newblock In \emph{Advances in Neural Information Processing Systems}, volume~34, pp.\  17695--17709. Curran Associates, Inc., 2021.
\newblock URL \url{https://papers.nips.cc/paper/2021/hash/940392f5f32a7ade1cc201767cf83e31-Abstract.html}.

\bibitem[Delon \& Desolneux(2020)Delon and Desolneux]{delon_desolneux_wasserstein_type_distance_GMM}
Delon, J. and Desolneux, A.
\newblock A wasserstein-type distance in the space of gaussian mixture models.
\newblock In \emph{SIAM Journal on Imaging Sciences}, pp.\  936--970, 2020.

\bibitem[Dhariwal \& Nichol(2021)Dhariwal and Nichol]{dhariwal_diffusion_beats_gan_2021_neurips}
Dhariwal, P. and Nichol, A.
\newblock Diffusion models beat {GANs} on image synthesis.
\newblock In \emph{Advances in Neural Information Processing Systems}, volume~34, pp.\  8780--8794. Curran Associates, Inc., 2021.
\newblock URL \url{https://papers.nips.cc/paper/2021/hash/49ad23d1ec9fa4bd8d77d02681df5cfa-Abstract.html}.

\bibitem[Dowson \& Landau(1982)Dowson and Landau]{Dowson_Landau_Frechet_distance_between_multivariate_normal_distributions_1982}
Dowson, D. and Landau, B.
\newblock The fréchet distance between multivariate normal distributions.
\newblock \emph{Journal of Multivariate Analysis}, 12\penalty0 (3):\penalty0 450--455, 1982.
\newblock ISSN 0047-259X.
\newblock \doi{https://doi.org/10.1016/0047-259X(82)90077-X}.
\newblock URL \url{https://www.sciencedirect.com/science/article/pii/0047259X8290077X}.

\bibitem[Ferradans et~al.(2013)Ferradans, Xia, Peyr{\'e}, and Aujol]{Ferradans_Static_dynamic_texture_0T_2013}
Ferradans, S., Xia, G.-S., Peyr{\'e}, G., and Aujol, J.-F.
\newblock Static and dynamic texture mixing using optimal transport.
\newblock In \emph{Scale Space and Variational Methods in Computer Vision}, 2013.

\bibitem[Franzese et~al.(2023)Franzese, Rossi, Yang, Finamore, Rossi, Filippone, and Michiardi]{Giulio_how_much_enough_Entropy_2023}
Franzese, G., Rossi, S., Yang, L., Finamore, A., Rossi, D., Filippone, M., and Michiardi, P.
\newblock How much is enough? a study on diffusion times in score-based generative models.
\newblock \emph{Entropy}, 25\penalty0 (4), 2023.
\newblock ISSN 1099-4300.
\newblock \doi{10.3390/e25040633}.
\newblock URL \url{https://www.mdpi.com/1099-4300/25/4/633}.

\bibitem[Galerne et~al.(2011)Galerne, Gousseau, and Morel]{Galerne_Gousseau_Morel_random_phase_textures_2011}
Galerne, B., Gousseau, Y., and Morel, J.-M.
\newblock Random {Phase} {Textures}: {Theory} and {Synthesis}.
\newblock \emph{IEEE Transactions on Image Processing}, 20\penalty0 (1):\penalty0 257--267, 2011.
\newblock \doi{10.1109/TIP.2010.2052822}.

\bibitem[Gao \& Zhu(2024)Gao and Zhu]{Convergence_Analysis_general_diffusion_models_Gao_2024_arxiv}
Gao, X. and Zhu, L.
\newblock Convergence analysis for general probability flow odes of diffusion models in wasserstein distances.
\newblock \emph{ArXiv}, abs/2401.17958, 2024.
\newblock URL \url{https://api.semanticscholar.org/CorpusID:267334658}.

\bibitem[Heusel et~al.(2017)Heusel, Ramsauer, Unterthiner, Nessler, and Hochreiter]{Heusel_etal_GANs_local_nash_equilibrium_NIPS2017}
Heusel, M., Ramsauer, H., Unterthiner, T., Nessler, B., and Hochreiter, S.
\newblock Gans trained by a two time-scale update rule converge to a local {N}ash equilibrium.
\newblock In Guyon, I., Luxburg, U.~V., Bengio, S., Wallach, H., Fergus, R., Vishwanathan, S., and Garnett, R. (eds.), \emph{Advances in Neural Information Processing Systems}, volume~30. Curran Associates, Inc., 2017.
\newblock URL \url{https://proceedings.neurips.cc/paper/2017/file/8a1d694707eb0fefe65871369074926d-Paper.pdf}.

\bibitem[Ho et~al.(2020)Ho, Jain, and Abbeel]{ho_ddpm_2020_neurips}
Ho, J., Jain, A., and Abbeel, P.
\newblock Denoising diffusion probabilistic models.
\newblock In \emph{Advances in Neural Information Processing Systems}, volume~33, pp.\  6840--6851. Curran Associates, Inc., 2020.
\newblock URL \url{https://proceedings.neurips.cc/paper/2020/hash/4c5bcfec8584af0d967f1ab10179ca4b-Abstract.html}.

\bibitem[Karras et~al.(2022)Karras, Aittala, Aila, and Laine]{Karras_Elucidating_desing_diffusion_neurips_2022}
Karras, T., Aittala, M., Aila, T., and Laine, S.
\newblock Elucidating the design space of diffusion-based generative models.
\newblock In \emph{Proc. NeurIPS}, 2022.

\bibitem[Khrulkov et~al.(2023)Khrulkov, Ryzhakov, Chertkov, and Oseledets]{understanding_DDPM_through_OT_Khrulkov_ICLR_2023}
Khrulkov, V., Ryzhakov, G., Chertkov, A., and Oseledets, I.
\newblock Understanding {DDPM} latent codes through optimal transport.
\newblock In \emph{The Eleventh International Conference on Learning Representations}, 2023.
\newblock URL \url{https://openreview.net/forum?id=6PIrhAx1j4i}.

\bibitem[Lavenant \& Santambrogio(2022)Lavenant and Santambrogio]{Fokker_Planck_not_0T_Lavenant_Applied_Mathematics_Letters_2022}
Lavenant, H. and Santambrogio, F.
\newblock The flow map of the fokker–planck equation does not provide optimal transport.
\newblock \emph{Applied Mathematics Letters}, 133:\penalty0 108225, 2022.
\newblock ISSN 0893-9659.
\newblock \doi{https://doi.org/10.1016/j.aml.2022.108225}.
\newblock URL \url{https://www.sciencedirect.com/science/article/pii/S089396592200180X}.

\bibitem[Lee et~al.(2022)Lee, Lu, and Tan]{convergence_score_based_general_data_distribution_lee_neurips_2022}
Lee, H., Lu, J., and Tan, Y.
\newblock Convergence of score-based generative modeling for general data distributions.
\newblock In \emph{NeurIPS 2022 Workshop on Score-Based Methods}, 2022.

\bibitem[Lee et~al.(2024)Lee, Lu, and Tan]{Lee_convergence_score_based_neurips_2022}
Lee, H., Lu, J., and Tan, Y.
\newblock Convergence for score-based generative modeling with polynomial complexity.
\newblock In \emph{Proceedings of the 36th International Conference on Neural Information Processing Systems}, NIPS '22, Red Hook, NY, USA, 2024. Curran Associates Inc.
\newblock ISBN 9781713871088.

\bibitem[{\O}ksendal(2010)]{oksendal_SDE_with_applications_springer_2010}
{\O}ksendal, B.
\newblock \emph{Stochastic Differential Equations: An Introduction with Applications}.
\newblock Universitext. Springer Berlin Heidelberg, 2010.
\newblock ISBN 9783642143946.
\newblock URL \url{https://books.google.fr/books?id=EQZEAAAAQBAJ}.

\bibitem[Pardoux(1986)]{pardoux_1986}
Pardoux, E.
\newblock Grossissement d'une filtration et retournement du temps d'une diffusion.
\newblock In Az{\'e}ma, J. and Yor, M. (eds.), \emph{S{\'e}minaire de Probabilit{\'e}s XX 1984/85}, pp.\  48--55, Berlin, Heidelberg, 1986. Springer Berlin Heidelberg.
\newblock ISBN 978-3-540-39860-8.

\bibitem[Peyré \& Cuturi(2019)Peyré and Cuturi]{Peyre_book_OT}
Peyré, G. and Cuturi, M.
\newblock Computational optimal transport.
\newblock \emph{Foundations and Trends in Machine Learning}, 11\penalty0 (5-6):\penalty0 355--607, 2019.

\bibitem[S{\"a}rkk{\"a} \& Solin(2019)S{\"a}rkk{\"a} and Solin]{sarkka_applied_SDE_2019}
S{\"a}rkk{\"a}, S. and Solin, A.
\newblock \emph{Applied Stochastic Differential Equations}.
\newblock Institute of Mathematical Statistics Textbooks. Cambridge University Press, 2019.
\newblock ISBN 9781316510087.
\newblock URL \url{https://books.google.fr/books?id=g5SODwAAQBAJ}.

\bibitem[Shah et~al.(2023)Shah, Chen, and Klivans]{shah_learning_mixture_Gaussians_Neurips_2023}
Shah, K., Chen, S., and Klivans, A.
\newblock Learning mixtures of gaussians using the {DDPM} objective.
\newblock In \emph{Thirty-seventh Conference on Neural Information Processing Systems}, 2023.
\newblock URL \url{https://openreview.net/forum?id=aig7sgdRfI}.

\bibitem[Song et~al.(2021{\natexlab{a}})Song, Meng, and Ermon]{Song_etal_denoising_diffusion_implicit_models_ICLR2021}
Song, J., Meng, C., and Ermon, S.
\newblock Denoising diffusion implicit models.
\newblock In \emph{International Conference on Learning Representations}, 2021{\natexlab{a}}.
\newblock URL \url{https://openreview.net/forum?id=St1giarCHLP}.

\bibitem[Song \& Ermon(2019)Song and Ermon]{song_generative_2019-1}
Song, Y. and Ermon, S.
\newblock Generative modeling by estimating gradients of the data distribution.
\newblock In Wallach, H., Larochelle, H., Beygelzimer, A., Alché-Buc, F.~d., Fox, E., and Garnett, R. (eds.), \emph{Advances in Neural Information Processing Systems}, volume~32. Curran Associates, Inc., 2019.
\newblock URL \url{https://proceedings.neurips.cc/paper/2019/file/3001ef257407d5a371a96dcd947c7d93-Paper.pdf}.

\bibitem[Song et~al.(2021{\natexlab{b}})Song, Sohl-Dickstein, Kingma, Kumar, Ermon, and Poole]{yang_song_score_based_sde_2021_ICLR}
Song, Y., Sohl-Dickstein, J., Kingma, D.~P., Kumar, A., Ermon, S., and Poole, B.
\newblock Score-based generative modeling through stochastic differential equations.
\newblock In \emph{International Conference on Learning Representations}, 2021{\natexlab{b}}.
\newblock URL \url{https://openreview.net/forum?id=PxTIG12RRHS}.

\bibitem[Szegedy et~al.(2016)Szegedy, Vanhoucke, Ioffe, Shlens, and Wojna]{Szegedy_etal_Rethinking_the_Inception_Architecture_for_Computer_Vision_CVPR2016}
Szegedy, C., Vanhoucke, V., Ioffe, S., Shlens, J., and Wojna, Z.
\newblock Rethinking the inception architecture for computer vision.
\newblock In \emph{Proceedings of the IEEE Conference on Computer Vision and Pattern Recognition (CVPR)}, June 2016.

\bibitem[Wang \& Vastola(2023)Wang and Vastola]{Binxu_wang_hidden_linear_Structure_Score_based_workshop_neurips}
Wang, B. and Vastola, J.~J.
\newblock The hidden linear structure in score-based models and its application, 2023.

\bibitem[Wenliang \& Moran(2022)Wenliang and Moran]{wenliang_score_based_learn_manifold_neurips_2022}
Wenliang, L.~K. and Moran, B.
\newblock Score-based generative model learn manifold-like structures with constrained mixing.
\newblock In \emph{NeurIPS 2022 Workshop on Score-Based Methods}, 2022.
\newblock URL \url{https://openreview.net/forum?id=eSZqaIrDLZR}.

\bibitem[Zach et~al.(2023)Zach, Pock, Kobler, and Chambolle]{Zach_explicit_diffusion_Gaussian_mixture}
Zach, M., Pock, T., Kobler, E., and Chambolle, A.
\newblock Explicit diffusion of gaussian mixture model based image priors.
\newblock In Calatroni, L., Donatelli, M., Morigi, S., Prato, M., and Santacesaria, M. (eds.), \emph{Scale Space and Variational Methods in Computer Vision}, pp.\  3--15, Cham, 2023. Springer International Publishing.
\newblock ISBN 978-3-031-31975-4.

\bibitem[Zach et~al.(2024)Zach, Kobler, Chambolle, and Pock]{Zach_Gaussian_miwture_diffusion_models_2024}
Zach, M., Kobler, E., Chambolle, A., and Pock, T.
\newblock Product of gaussian mixture diffusion models.
\newblock \emph{Journal of Mathematical Imaging and Vision}, Mar 2024.
\newblock ISSN 1573-7683.
\newblock \doi{10.1007/s10851-024-01180-3}.
\newblock URL \url{https://doi.org/10.1007/s10851-024-01180-3}.

\end{thebibliography}
\bibliographystyle{icml2025}

\appendix

\onecolumn
\section{Characterization of Gaussian distributions through diffusion models}
\label{appendix:prop:Gaussian_linear_score}
\label{appendix:Gaussian_linear_score}

The following proposition shows that our Gaussian assumption occurs if and only if the score function is linear.

\begin{prop}
\label{prop:Gaussian_linear_score}
	The three following propositions are equivalent:
	\begin{enumerate}[(i)]
		\item $\x_0 \sim \mathcal{N}(\zero,\bSigma)$ for some covariance $\bSigma$.
		\item $\forall t > 0, \nabla_x \log p_t(\x)$ is linear w.r.t $\x$.
		\item $\exists t> 0, \nabla_x \log p_t(\x)$ is linear w.r.t $\x$.
	\end{enumerate}
	In this case, for $t > 0$, $\nabla_{\x} \log p_t(\x) = -\bSigma^{-1}_{t} \x$, with $\bSigma_{t}$ defined in Proposition~\ref{prop:solution_forward_sde}.
\end{prop}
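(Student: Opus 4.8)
I would prove Proposition~\ref{prop:Gaussian_linear_score} by establishing the cycle $(i)\Rightarrow(ii)\Rightarrow(iii)\Rightarrow(i)$ and then reading off the closed form of the score from the argument. The implication $(i)\Rightarrow(ii)$ is immediate from Proposition~\ref{prop:solution_forward_sde}: if $\x_0\sim\mathcal{N}(\zero,\bSigma)$, then $\x_t=e^{-B_{t}}\x_0+\etab_t$ is the sum of two independent centered Gaussian vectors, so $p_t=\mathcal{N}(\zero,\bSigma_t)$ with $\bSigma_t=e^{-2B_{t}}\bSigma+(1-e^{-2B_{t}})\I$; since $\beta>0$ we have $1-e^{-2B_{t}}>0$ for $t>0$, hence $\bSigma_t\succ0$ is invertible and $\nabla_\x\log p_t(\x)=-\bSigma_t^{-1}\x$ is linear. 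The implication $(ii)\Rightarrow(iii)$ is trivial.

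The substance is $(iii)\Rightarrow(i)$, which I would carry out in two stages. Fix $t_0>0$ with $\nabla_\x\log p_{t_0}(\x)=\A\x$ for some matrix $\A$. Stage one: show $p_{t_0}$ is Gaussian. Since $\x_{t_0}$ is the independent sum of $e^{-B_{t_0}}\x_0$ and the nondegenerate Gaussian $\etab_{t_0}$, the density $p_{t_0}$ is $C^\infty$ and strictly positive on $\R^d$, so $\log p_{t_0}$ is $C^\infty$; its Hessian equals $\A$, which forces $\A$ to be symmetric, and integrating the gradient along segments gives $p_{t_0}(\x)=p_{t_0}(\zero)\exp(\tfrac12\x^\top\A\x)$. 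Integrability of $p_{t_0}$ then forces $\A\prec0$, so $p_{t_0}=\mathcal{N}(\zero,\bSigma_{t_0})$ with $\bSigma_{t_0}:=-\A^{-1}\succ0$. Stage two: deconvolve at the level of characteristic functions. Writing $\widehat g$ for the (zero-free) characteristic function of $\etab_{t_0}\sim\mathcal{N}(\zero,(1-e^{-2B_{t_0}})\I)$, independence gives $\phi_{\x_{t_0}}(\xi)=\phi_{\x_0}(e^{-B_{t_0}}\xi)\,\widehat g(\xi)$, and substituting the Gaussian forms of $\phi_{\x_{t_0}}$ and $\widehat g$ yields, after the change of variable $\zeta=e^{-B_{t_0}}\xi$,
\[
\phi_{\x_0}(\zeta)=\exp\!\left(-\tfrac12\,\zeta^\top\boldsymbol{M}\zeta\right),\qquad \boldsymbol{M}:=e^{2B_{t_0}}\bSigma_{t_0}-(e^{2B_{t_0}}-1)\I .
\]
Since $\phi_{\x_0}$ is a genuine characteristic function, hence bounded, $\boldsymbol{M}$ must be positive semidefinite (a negative eigendirection of $\boldsymbol{M}$ would make $|\phi_{\x_0}|$ unbounded), so by uniqueness of characteristic functions $\x_0\sim\mathcal{N}(\zero,\boldsymbol{M})$, which is $(i)$ with $\bSigma=\boldsymbol{M}$. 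This second stage can alternatively be quoted directly as Cram\'er's decomposition theorem.

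Finally, the closed form claimed at the end follows by feeding the now-established $(i)$ back into the $(i)\Rightarrow(ii)$ computation: for every $t>0$, $\nabla_\x\log p_t(\x)=-\bSigma_t^{-1}\x$ with $\bSigma_t$ as in Proposition~\ref{prop:solution_forward_sde}; the identity $e^{-2B_{t_0}}\boldsymbol{M}+(1-e^{-2B_{t_0}})\I=\bSigma_{t_0}$ confirms consistency of the covariance found above. I expect the only real obstacle to be the $(iii)\Rightarrow(i)$ direction, and within it the passage from ``linear score at a single time $t_0$'' to ``$p_{t_0}$ is exactly Gaussian''; the deconvolution is painless here precisely because the injected noise $\etab_{t_0}$ is an explicit nondegenerate Gaussian whose characteristic function never vanishes, so I would keep the argument resting on that rather than on a general deconvolution principle. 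If ``linear'' were meant in the affine sense, the identical reasoning produces a possibly non-centered Gaussian.
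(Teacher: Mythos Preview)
Your proof is correct and follows the same cycle $(i)\Rightarrow(ii)\Rightarrow(iii)\Rightarrow(i)$ as the paper, with the real content in $(iii)\Rightarrow(i)$. The paper's version of that step is extremely terse: after concluding that $\x_t$ is Gaussian, it writes $\x_0=e^{B_t}\x_t-\etab_t$ and declares $\x_0$ Gaussian as a linear combination of Gaussians. Taken literally this is circular, since joint Gaussianity of $(\x_t,\etab_t)$ is precisely what is at stake; the intended argument is Cram\'er's theorem applied to the independent decomposition $\x_t=e^{-B_t}\x_0+\etab_t$. Your Stage two makes this explicit via characteristic functions (and names Cram\'er as the shortcut), so your treatment is in fact more complete than the paper's on this point, while the overall route is the same.
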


\begin{proof}
$(ii) \Rightarrow (iii)$ is clear.

If $(i)$, for $t > 0$, $p_t(\x) = C_t \exp\left(-\frac{1}{2}\x^T \bSigma^{-1}_{t}\x\right)$. Consequently, $\nabla_{\x}\log p_t(\x) = -\bSigma^{-1}_{t}\x$ and $(i) \Rightarrow (ii)$

If $(iii)$, there exists $A$ such that $\nabla_{\x}\log p_t(\x) = A\x$. Consequently, $p_t(\x) = C_t\exp(-\frac{1}{2}\x^TA\x)$ and $\x_t$ is Gaussian. This provides that $\x_0 = e^{B_{t}}\x_t -\etab_t$ is Gaussian and $(iii) \Rightarrow (i)$.
\end{proof}

\section{Proofs of Section~\ref{sec:exact_sde_ode_solutions}}
 \subsection{Proposition~\ref{prop:solution_forward_sde}: Solution of the forward SDE}

 \label{appendix:proof_prop:solution_forward_sde}

 We aim at solving:

\begin{equation}
 	d\x_t  =-\beta_t \x_t dt + \sqrt{2\beta_t}d\w_t, \quad  \x_0 \sim \pdata.
\end{equation}

 By considering $\z_t = e^{B_{t}} \x_t $ where $B_{t} = \int_0^t \beta_sds$,

\begin{equation}
	d\z_t
	= \beta_{t} e^{B_{t}}\x_t dt  +  e^{B_{t}}d\x_t
	=  \beta_{t} e^{B_{t}}\x_t dt +  e^{B_{t}}(-\beta_t \x_t dt + \sqrt{2\beta_t}d\w_t)
	= \sqrt{2\beta_{t}}e^{B_{t}}d\w_t.
\end{equation}

Consequently, for $0\leq t \leq T$,

\begin{equation}
\z_t = \z_0 + \int_0^t\sqrt{2\beta_{s}}e^{B_{s}}d\w_s, \z_0 = e^{B_{0}}\x_0 = \x_0
\end{equation}

and for $0\leq t \leq T$,

\begin{equation}
	\x_t
	 = e^{-B_{t}}\z_t 
     = e^{-B_{t}}\x_0 + e^{-B_{t}}\int_0^te^{B_{s}} \sqrt{2\beta_{s}}d\w_s  
     = e^{-B_{t}}\x_0 + \etab_t. 
\end{equation}

By Itô's isometry (see e.g \cite{oksendal_SDE_with_applications_springer_2010}),

\begin{equation}
	\text{Var}\left( \int_0^te^{B_{s}} \sqrt{2\beta_{s}}d\w_s\right)
	 = \int_{0}^t  2\beta_{s}e^{2B_{s}}ds 
	 =[e^{2B_{s}}]_{0}^t 
	 = e^{2B_{t}}-e^{2B_{0}} 
	 = e^{2B_{t}}-1 
\end{equation}

which provides the covariance matrix of $\etab_t$:

\begin{equation}
	\Cov\left(\etab_t \right)
	 = e^{-2B_{t}}(e^{2B_{t}}-1)\I 
	 = \left(1-e^{-2B_{t}}\right)\I.
\end{equation}

Because $\x_0$ and $\etab_t$ are independent, $\bSigma_{t} =  e^{-2B_{t}}\bSigma+\left(1-e^{-2B_{t}}\right)\I$.

And,
\begin{equation}
	d\bSigma_t
	 = -2\beta_{t} e^{-2B_{t}}(\bSigma - \I)dt
	 =  -2\beta_{t}\left[\bSigma_{t} -\I\right]dt
\end{equation}

  \subsection{Proposition~\ref{prop:solution_backward_sde_gaussian_assumption}: Solution of the backward SDE under Gaussian assumption}

 \label{appendix:proof_prop:solution_backward_sde}

 We aim at solving

\begin{equation}
 d\y_t = \beta_{T-t}(\y_t - 2 \bSigma^{-1}_{T-t}\y_t)dt + \sqrt{2\beta_{T-t}}d\w_t, \quad  0 \leq t \leq T
\end{equation}

Denoting $C_{t} = \int_{0}^t \beta_{T-s}ds$, by considering $\z_t = \bSigma^{-1}_{T-t} e^{C_{t}}\y_t$,

{\small
\begin{align}
	d\z_t
	& =  e^{C_{t}}\bSigma^{-1}_{T-t}d\y_t +  e^{C_{t}}d[\bSigma^{-1}_{T-t}]\y_t + \beta_{T-t} \z_t dt \\
    & =  e^{C_{t}}\bSigma^{-1}_{T-t}d\y_t -  e^{C_{t}}\bSigma^{-1}_{T-t}d[\bSigma_{T-t}]\bSigma^{-1}_{T-t}\y_t + \beta_{T-t} \z_t dt \text{ by derivative of the inverse matrix}\\
    & =  e^{C_{t}}\bSigma^{-1}_{T-t}\left[\beta_{T-t}(\y_t - 2 \bSigma^{-1}_{T-t}\y_t)dt + \sqrt{2\beta_{T-t}}d\w_t\right] -2\beta_{T-t}  e^{C_{t}}\bSigma^{-1}_{T-t}\left[\bSigma_{T-t} -\I\right]\bSigma^{-1}_{T-t}\y_tdt + \beta_{T-t} \z_t dt \\
    & \text{(using \Cref{eq:ode_covariance})}\\
    & = \left[\bSigma^{-1}_{T-t} e^{C_{t}}\beta_{T-t}(\y_t   - 2\bSigma^{-1}_{T-t}\y_t ) - \beta_{T-t}\z_t + 2\beta_{T-t}\bSigma^{-1}_{T-t}
            \z_t \right]dt  +   \sqrt{2\beta_{T-t}}e^{C_{t}}\bSigma^{-1}_{T-t}d\w_t \\
    & =\beta_{T-t}(\I   - 2\bSigma^{-1}_{T-t})\z_t dt  - \beta_{T-t}\z_tdt + 2\beta_{T-t}\bSigma^{-1}_{T-t}
            \z_tdt +  e^{C_{t}}\sqrt{2\beta_{T-t}}\bSigma^{-1}_{T-t}d\w_t \\
    & =  \sqrt{2\beta_{T-t}}e^{C_{t}}\bSigma^{-1}_{T-t}d\w_t. \\
\end{align}
}

Consequently,

\begin{equation}
	\z_t
	 = \z_0 + \int_0^t\sqrt{2\beta_{T-s}}e^{C_{s}}\bSigma^{-1}_{T-s} d\w_s
	= \bSigma^{-1}_{T}\y_0 + \int_0^t\sqrt{2\beta_{T-s}}e^{C_{s}}\bSigma^{-1}_{T-s}d\w_s.
\end{equation}

 And,
\begin{equation}
 	\y_t
 	 = e^{-C_{t}}\bSigma_{T-t}\z_t 
 	= e^{-C_{t}}\bSigma_{T-t}\bSigma^{-1}_{T}\y_0 + e^{-C_{t}}\bSigma_{T-t}\int_0^t\bSigma^{-1}_{T-s} e^{C_{s}}\sqrt{2\beta_{T-s}}d\w_s.
\end{equation}

 Finally,

\begin{equation}
 	\y_t =  e^{-C_{t}}\bSigma_{T-t}\bSigma^{-1}_{T}\y_0 + \bxi_t
 	\quad \text{with}\quad
 	 \bxi_t = e^{-C_{t}}\bSigma_{T-t}\int_0^t\bSigma^{-1}_{T-s} e^{C_{s}}\sqrt{2\beta_{T-s}}d\w_s.
\end{equation}

 By the multidimensional Itô's isometry,

\begin{equation}
	\Cov(\int_0^t\bSigma^{-1}_{T-s} e^{C_{s}}\sqrt{2\beta_{T-s}}d\w_s)
	 = 2\int_0^t e^{2C_{s}}\beta_{T-s}\bSigma^{-2}_{T-s}ds.
\end{equation}

Now, remark that for $A_s = e^{2C_{s}}\bSigma^{-1}_{T-s}$,

\begin{align}
	dA_s
	& = 2\beta_{T-s} A_s ds + e^{2C_{s}}d\left[\bSigma_{T-s}^{-1}\right] \\
	& = 2\beta_{T-s} A_s  ds - 2\beta_{T-s}e^{2C_{s}}\left[\I-\bSigma^{-1}_{T-s}\right]\bSigma^{-1}_{T-s}ds \text{ (using \Cref{eq:ode_covariance})}\\
	& = 2e^{2C_{s}}\beta_{T-s}\bSigma^{-2}_{T-s}ds.
\end{align}

\begin{equation}
 \Cov\left(\int_0^t\bSigma^{-1}_{T-s} e^{C_{s}}\sqrt{\beta_{T-s}}d\w_s\right)
	 = \int_0^t d A_s  
	 = \left[A_s\right]_0^t
	 = e^{2C_{t}}\bSigma^{-1}_{T-t} - \bSigma^{-1}_{T}.
\end{equation}

 Finally, $\Cov(\bxi_t) = \bSigma_{T-t}^2\left( \bSigma^{-1}_{T-t} - e^{-2C_{t}}\bSigma^{-1}_{T}\right) = \bSigma_{T-t} - e^{-2C_{t}}\bSigma_{T-t}^2\bSigma^{-1}_{T}$

 We have the final formula considering:
\begin{equation}
 	C_{t}
 	 = \int_0^t \beta_{T-s}ds 
 	 = \int_{T-t}^T \beta_{x}dx 
 	 = \int_{0}^T \beta_{x}dx -\int_{0}^{T-t} \beta_{x}dx
 	 = B_T - B_{T-t}
\end{equation}

 that provides

 \begin{equation}
 	\Cov(\y_t) = \bSigma_{T-t} + e^{-2(B_{T} - B_{T-t})}\bSigma_{T-t}^2\bSigma^{-1}_{T}\left(\bSigma^{-1}_{T-t}\Cov(\y_0)\bSigma^{-1}_{T}\bSigma_{T-t}-\I \right).
  \end{equation}

In particular, if $\Cov(\y_0)$ and $\bSigma$ commute,

   \begin{equation}
 	\Cov(\y_t) = \bSigma_{T-t} + e^{-2(B_{T} - B_{T-t})}\bSigma_{T-t}^2\bSigma^{-1}_{T}\left(\bSigma^{-1}_{T}\Cov(\y_0)-\I \right).
  \end{equation}

 \subsection{Proposition~\ref{prop:solution_ode_Gaussian}: Solution of the ODE probability flow under Gaussian assumption}
 \label{appendix:proof_prop:solution_ode_Gaussian}

 As done in \cite{understanding_DDPM_through_OT_Khrulkov_ICLR_2023}, the matrix $\bSigma^{1/2}_{t}$ admits a derivative which is $d\left[\bSigma_{t}^{1/2}\right] = \frac{1}{2}d\bSigma_t\bSigma^{-1/2}_t$ because it is diagonalisable. Let us check that 
 \begin{equation}
     \y_t = \bSigma^{-1/2}_T\bSigma_{T-t}^{1/2}\y_0
     \label{eq:probability_flow_solution_gaussian_case_appendix}
 \end{equation}
 is solution of the ODE of \Cref{eq:flow_reverse_ode}:

 \begin{align}
 	d\y_t
 	& = -\bSigma^{-1/2}_T\frac{1}{2}d\bSigma_{T-t}\bSigma^{-1/2}_{T-t}\y_0 \\ 
 	& = \bSigma^{-1/2}_T\left[\beta_{T-t} \bSigma_{T-t} - \beta_{T-t}\I\right]\bSigma^{-1/2}_{T-t}\y_0 dt \quad (\text{using \Cref{eq:ode_covariance}})\\
  & = \left[\beta_{T-t} \bSigma_{T-t} - \beta_{T-t}\I\right]\bSigma^{-1}_{T-t}\bSigma^{-1/2}_T\bSigma^{1/2}_{T-t}\y_0 dt \quad (\text{by commutativity})\\
 	& = \left[\beta_{T-t}  - \beta_{T-t}\bSigma^{-1}_{T-t}\right]\y_t dt \\
 	& = \left[\beta_{T-t}  + \beta_{T-t}\nabla_{\y}\log p_{T-t}(\y_t)\right]\y_t dt.
 \end{align}

Let us discuss the link between this solution and OT. 
The formula of OT map between two centered Gaussian distributions $\mathcal{N}(\zero,\bSigma_1)$ and $\mathcal{N}(\zero,\bSigma_2)$ is well known. In \cite{Peyre_book_OT}, the authors give the linear map (affine when the distributions are not centered) $\boldsymbol{T} : \X \mapsto \A\X$ with
\begin{equation}
    \A =  \bSigma_1^{-\frac{1}{2}} \left(\bSigma_1^{\frac{1}{2}} \bSigma_2 \bSigma_1^{\frac{1}{2}} \right)^{\frac{1}{2}} \bSigma_1^{-\frac{1}{2}}. 
\end{equation}
When $\bSigma_1$ and $\bSigma_2$ commute, this expression simplifies to:
\begin{equation}
    \A = \bSigma_1^{-1/2}\bSigma_2^{1/2}.
\end{equation}
We showed that the solution (\Cref{eq:probability_flow_solution_gaussian_case_appendix}) of the backward probability flow in the finite interval $[0,t]$, with $0\leq t \leq T$, corresponds to applying to the initial point $\y_0$ the linear map
\begin{equation}
    \A = \bSigma_T^{-\frac{1}{2}} \bSigma_{T-t}^{\frac{1}{2}},
\end{equation}
that is, the OT map between $p_T = \mathcal{N}(\zero, \bSigma_T)$ and
$p _{T-t} = \mathcal{N}( \zero, \bSigma_{T-t} )$.

Let us now derive the covariance matrix of the solution, which characterises a Gaussian distribution.

\begin{equation}
	\Cov(\y_t) = \bSigma^{-1/2}_T\bSigma_{T-t}^{1/2}\Cov(\y_0)\bSigma^{-1/2}_{T-t}\bSigma_{T}^{1/2}.
\end{equation}

In particular, if $\Cov(\y_0)$ and $\bSigma$ commute,

\begin{equation}
	\Cov(\y_t) = \bSigma^{-1}_T\bSigma_{T-t}\Cov(\y_0).
\end{equation}

 \subsection{Proof of Proposition~\ref{prop:marginal_generatives}}
 \label{appendix:proof_prop:marginal_generatives}

    For $0 \leq t \leq T$, denoting $\left(\lambda_{i,t}\right)_{1\leq i \leq d}$ the eigenvalues of $\bSigma_{t}$, the eigenvalues of $\SigmagenSDE_t = \Cov(\ygenSDE_{T-t})$ are

\begin{equation}
 	\tilde{\lambda}_{i,t}
 	 = \lambda_{i,t} + e^{-2(B_T - B_{t})}\lambda_{i,t}^2\frac{1}{\lambda_{i,T}}\left(\frac{1}{\lambda_{i,T}}-1\right),\quad  i=1, \ldots, d.
\end{equation}

 and the eigenvalues of $\SigmagenODE_t = \Cov(\ygenODE_{T-t})$ are

\begin{equation}
 	\widehat{\lambda}_{i,t}
 	= \frac{\lambda_{i,t}}{\lambda_{i,T}},\quad  1 \leq i \leq d.
\end{equation}

  Consequently, $\Wass(p_{t},\pgenSDE_t)$ is the sum of the squares of all:

\begin{equation}
 	\sqrt{\lambda_{i,t}} - \sqrt{\tilde{\lambda}_{i,t}}
 	= \sqrt{\lambda_{i,t}}\left(1-\sqrt{1+e^{-2(B_T - B_{t})}\lambda_{i,t}\frac{1}{\lambda_{i,T}}\left(\frac{1}{\lambda_{i,T}}-1\right)}\right).
\end{equation}

  Similarly, $\Wass(p_{t},\pgenODE_t)$ is the sum of the squares of all:

 \begin{align}
 	\sqrt{\lambda_{i,t}} - \sqrt{\widehat{\lambda}_{i,t}}
 	 = \sqrt{\lambda_{i,t}}\left(1-\sqrt{\frac{1}{\lambda_{i,T}}}\right) \\
 	 = \sqrt{\lambda_{i,t}}\left(1-\sqrt{1+\left(\frac{1}{\lambda_{i,T}}-1\right)}\right).
 \end{align}

Let us now compare individually these differences.

 \begin{align}
 	\frac{e^{-2(B_T - B_{t})}\lambda_{i,t}\frac{1}{\lambda_{i,T}}\left(\frac{1}{\lambda_{i,T}}-1\right)}{\frac{1}{\lambda_{i,T}}-1}
 	& = e^{-2(B_T - B_{t})}\frac{\lambda_{i,t}}{\lambda_{i,T}} \\
 	& = e^{-2(B_T - B_{t})}\frac{e^{-2B_{t}}(\lambda_i-1)+1}{e^{-2B_{T}}(\lambda_i-1)+1} \\
 	& = \frac{(\lambda_i-1)+e^{2B_{t}}}{(\lambda_i-1)+e^{2B_{T}}} \\
 	& < 1.
 \end{align}

 \textbf{Case 1: $0 < \lambda_i < 1$ and $t > 0$}

 In this case, $\lambda_{i,T} < 1$ and:

\begin{equation}
0 < e^{-2(B_T - B_{t})}\lambda_{i,t}\frac{1}{\lambda_{i,T}}\left(\frac{1}{\lambda_{i,T}}-1\right) < \frac{1}{\lambda_{i,T}}-1.
\end{equation}

Thus,

 \begin{align}
 	\left|\sqrt{\lambda_{i,t}} - \sqrt{\tilde{\lambda}_{i,t}}\right|
 	& = \sqrt{\tilde{\lambda}_{i,t}}-\sqrt{\lambda_{i,t}} \\
 	& = \sqrt{\lambda_{i,t}}\left(\sqrt{1+e^{-2(B_T - B_{t})}\lambda_i^{t}\frac{1}{\lambda_{i,T}}\left(\frac{1}{\lambda_{i,T}}-1\right)}-1\right) \\
 	& < \sqrt{\lambda_{i,t}}\left(\sqrt{1+\left(\frac{1}{\lambda_{i,T}}-1\right)}-1\right) \\
 	& = \sqrt{\widehat{\lambda}_{i,t}}-\sqrt{\lambda_{i,t}} \\
 	& = \left|\sqrt{\lambda_{i,t}} - \sqrt{\widehat{\lambda}_{i,t}}\right|.
 \end{align}

\textbf{Case 2: $\lambda_i = 0$ and $t = 0$.}

In this case, for  $1 \leq i \leq d$, $\widehat{\lambda}_{i,T} = \tilde{\lambda}_{i,T} = 0$.

\textbf{Case 3: $\lambda_i = 1$.}

In this case, for  $1 \leq i \leq d$, $\widehat{\lambda}_{i,t} = \tilde{\lambda}_{i,t} = 1$.

 \textbf{Case 4: $1 < \lambda_i$.}

  In this case, $\lambda_{i,T} \geq 1$, and
  $
 	\frac{e^{-2(B_T - B_{t})}\lambda_{i,t}\frac{1}{\lambda_{i,T}}\left(\frac{1}{\lambda_{i,T}}-1\right)}{\frac{1}{\lambda_{i,T}}-1} = e^{-2(B_T - B_{t})}\frac{\lambda_{i,t}}{\lambda_{i,T}} < 1
$ provides

 \begin{equation}
e^{-2(B_T - B_{t})}\lambda_{i,t}\frac{1}{\lambda_{i,T}}\left(\frac{1}{\lambda_{i,T}}-1\right) >  \frac{1}{\lambda_{i,T}}-1.
 \end{equation}
 Finally,

 \begin{align}
 	\left|\sqrt{\lambda_{i,t}} - \sqrt{\tilde{\lambda}_{i,t}}\right|
 	& = \sqrt{\lambda_{i,t}}-\sqrt{\tilde{\lambda}_{i,t}} \\
 	& = \sqrt{\lambda_{i,t}}\left(1-\sqrt{1+e^{-2(B_T - B_{t})}\lambda_{i,T}\frac{1}{\lambda_{i,T}}\left(\frac{1}{\lambda_{i,T}}-1\right)}\right) \\
 	& < \sqrt{\lambda_{i,t}}\left(1-\sqrt{1+\left(\frac{1}{\lambda_{i,T}}-1\right)}\right) \\
 	& = \sqrt{\lambda_{i,t}}-\sqrt{\widehat{\lambda}_{i,t}} \\
 	& = \left|\sqrt{\lambda_{i,t}} - \sqrt{\widehat{\lambda}_{i,t}}\right|.
 \end{align}

This case study provides:

 \begin{equation}
 		\Wass(\pgenSDE_t,p_{t}) \leq \Wass(\pgenODE_t,p_{t}).
 	\end{equation}

\newpage

\section{Solution of the backward equations for a nonzero mean Gaussian distributions}
\label{appendix:solutions_with_non_zero_mean}

In the main paper, we assume a zero mean Gaussian for two reasons: first, to align with the machine learning framework, where data is typically normalized as a preprocessing; second, to simplify the notation of the equations. Nonetheless, all the results can be extended to the case of a nonzero mean so
let us provide the solutions to the backward equations to the case $\pdata = \mathcal{N}(\bmu,\bSigma)$.
In this case, $p_t(\x) = \mathcal{N}(e^{-B_t}\bmu,\bSigma_t)$ \cite{albergo_stochastic_interpolants_2023_arxiv} and
\begin{equation}
\label{eq:score_non_zero_mean}
    \nabla_{x} \log p_t(\x) = -\bSigma_t^{-1}(\x-e^{-B_t}\bmu), \quad 0< t \leq T.
\end{equation}

\subsection{Solution of the backward SDE}

The backward equation is given by
\begin{equation}
 d\y_t = \beta_{T-t}[\y_t - 2 \bSigma^{-1}_{T-t}(\y_t-e^{-B_{T-t}}\bmu)]dt + \sqrt{2\beta_{T-t}}d\w_t, \quad  0 \leq t \leq T.
\end{equation}

By considering $\z_t = \y_t - e^{-B_{T-t}}\bmu$,
\begin{align}
    d\z_t 
    & = d\y_t - \beta_{T-t} e^{-B_{T-t}}\bmu dt \\
    & = \beta_{T-t}[\y_t - 2 \bSigma^{-1}_{T-t}(\y_t-e^{-B_{T-t}}\bmu)]dt + \sqrt{2\beta_{T-t}}d\w_t- \beta_{T-t} e^{-B_{T-t}}\bmu dt \\
    & = \beta_{T-t}[\z_t - 2 \bSigma^{-1}_{T-t}\z_t]dt + \sqrt{2\beta_{T-t}}d\w_t
\end{align}
which is the backward equation for the zero mean distribution $\mathcal{N}(\zero,\bSigma)$ (Equation~\eqref{eq:sde_backward}). Consequently, by Proposition~\ref{prop:solution_backward_sde_gaussian_assumption}, 
\begin{equation}
    \z_t = e^{-(B_{T}-B_{T-t})}\bSigma_{T-t}\bSigma^{-1}_T\z_0 +\bxi_t, \quad 0 \leq t \leq T
\end{equation}
with $\bxi_t = e^{-C_{t}}\bSigma_{T-t}\int_0^t\bSigma^{-1}_{T-s} e^{C_{s}}\sqrt{2\beta_{T-s}}d\w_s$ 
and
\begin{equation}
    \y_t = e^{-B_{T-t}}\bmu+ e^{-(B_{T}-B_{T-t})}\bSigma_{T-t}\bSigma^{-1}_T\left(\y_0-e^{-B_T}\mu\right) +\bxi_t, \quad 0 \leq t \leq T.
\end{equation}

\subsection{Solution of the probability flow ODE}

The reverse probability flow ODE is given by
\begin{equation}
 	d\y_t = \beta_{T-t}\left[\y_t -\bSigma^{-1}_{T-t}(\y_t-e^{-B_{T-t}}\bmu)\right]dt, \quad 0 \leq t < T
  \end{equation}

By considering $\z_t = \y_t - e^{-B_{T-t}}\bmu$,
\begin{align}
    d\z_t 
    & = d\y_t - \beta_{T-t} e^{-B_{T-t}}\bmu dt \\
    & = \beta_{T-t}\left[\y_t -\bSigma^{-1}_{T-t}(\y_t-e^{-B_{T-t}}\bmu)\right]dt- \beta_{T-t} e^{-B_{T-t}}\bmu dt \\
    & =\beta_{T-t}\left[\z_t -\bSigma^{-1}_{T-t}\z_t\right]dt
\end{align}
which is the reverse probability-flow for the zero mean distribution $\mathcal{N}(\zero,\bSigma)$ (Equation~\ref{eq:flow_reverse_ode}). Consequently, by Proposition~\ref{prop:solution_ode_Gaussian}, 
\begin{equation}
    \z_t =\bSigma_T^{-1/2}\bSigma_{T-t}^{1/2}\z_0, \quad 0 \leq t \leq T
\end{equation}
and
\begin{equation}
    \y_t =e^{-B_{T-t}}\bmu+\bSigma_T^{-1/2}\bSigma_{T-t}^{1/2}(\y_0-e^{-B_T}\bmu), \quad 0 \leq t \leq T, \quad 0 \leq t \leq T.
\end{equation}

\newpage
 \section{Gaussian CIFAR-10 samples}

 \label{appendix:Gaussian_cifar10_samples}

 The Gaussian CIFAR-10 produces unstructured images. A grid of samples is presented in Figure~\ref{fig:samples_cifar10_Gaussian}. 
 To sample from this Gaussian, the empirical covariance matrix of size $\R^{(3\times 32 \times 32)\times (3 \times 32 \times 32)}$is computed and then the SVD decomposition to extract a square root matrix (see source code).
 \begin{figure}[!ht]
 	\includegraphics[width = \linewidth]{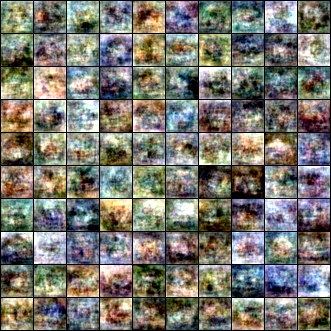}
 	\caption{\label{fig:samples_cifar10_Gaussian} \textbf{CIFAR-10 Gaussian samples}. Samples are generated from the Gaussian distribution fitting the CIFAR-10 dataset.}
 \end{figure}

\newpage
\section{Studied numerical schemes}

\label{appendix:table_numerical_schemes}

The studied numerical schemes are presented in Table~\ref{tab:discretization_schemes}.

\setlength{\tabcolsep}{6pt} 

\newlength{\schemewidth}
\setlength{\schemewidth}{12cm}

 \begin{table*}[ht]
 \scriptsize
	\centering
	\begin{tabular}{@{}lll@{}}
    && \hspace{\schemewidth} \\
	\toprule
\multirow{3}{*}{\rotatebox[origin=t]{90}{SDE schemes}} &\parbox{1cm}{Euler-Maruyama (EM)}  & 
\parbox{\schemewidth}{\begin{equation}
	\label{eq:Euler_Maruyama:SDE_backward}
    \begin{array}{l}
	\left\{
    \begin{array}{rl}
    \ySDEEM_{0} &\sim \N \\
	\ySDEEM_{k+1} & = \ySDEEM_{k} + \Delta_t \fSDE(t_k,\ySDEEM_{k}) + \sqrt{2\Delta_t \beta_{T-t_k}}\z_k, ~	\z_k \sim \N \text{\hspace{2.5cm}}
	\end{array}\right. \\
    \text{where}~
        \fSDE(t,\y) = \beta_{T-t}\y - 2\beta_{T-t}\bSigma_{T-t}^{-1}\y
    \end{array}
\end{equation}}
\\
\cmidrule(lr){2-3}%
&\parbox{1cm}{Exponential integrator (EI)} & 
\parbox{\schemewidth}{\begin{equation}
	\label{eq:EI_scheme:SDE}
	\begin{array}{l}
	\left\{%
    \begin{array}{rl}
        \ySDEEI_{0} &\sim \N \\
        \ySDEEI_{k+1} & = \ySDEEI_{k} + \gamma_{1,k}\left(\ySDEEI_{k} -2\bSigma^{-1}_{T-t_k}\ySDEEI_{k}\right) + \sqrt{2\gamma_{2,k}}z_k,
        ~\z_k \sim \N \text{\hspace{.7cm}}
    \end{array}\right.\quad  \quad \quad\quad \quad
    \\
    \text{where}~\gamma_{1,k} = \exp(B_{T-t_k}-B_{T-t_{k+1}})-1 ~\text{and}~
        \gamma_{2,k} = \frac{1}{2}(\exp(2 B_{T-t_k} - 2B_{T-t_{k+1}})-1)
    \end{array}
\end{equation}}
\\
\cmidrule(lr){2-3}%
&\parbox{1.8cm}{Denoising $\quad$ Diffusion $\qquad$ Probabilistic Model (DDPM)} & 
\parbox{\schemewidth}{\begin{equation}
	\label{eq:DDPM_scheme:SDE}
	\begin{array}{l}
	\left\{%
    \begin{array}{rl}
        \ySDEDDPM_{0} &\sim \N \\
         \ySDEDDPM_{k+1} & = \frac{1}{\sqrt{1-\beta_k^\text{DDPM}}}\left(\ySDEDDPM_{k}- \beta_k^\text{DDPM} \bSigma^{-1}_{T-t_k}\ySDEDDPM_{k}\right) + \sqrt{\beta^\text{DDPM}_k}z_k,~\z_k \sim \N \text{\hspace{.3cm}}
    \end{array}\right.
    \\
    \text{where}~\beta_k^\text{DDPM} = 2\Delta_t\beta(t_k)
    \end{array}
\end{equation}}\\
\midrule
\multirow{3}{*}{\rotatebox[origin=c]{90}{ODE schemes}} &\parbox{1cm}{Explicit Euler} &
\parbox{\schemewidth}{\begin{equation}
	\label{eq:Euler_explicite:flow_ODE}
    \begin{array}{l}
	\left\{
    \begin{array}{rl}
        \yODEEuler_{0} &\sim \N \\
        \yODEEuler_{k+1}  & = \yODEEuler_{k}+ \Delta_t \fODE(t_k,\yODEEuler_k) \text{\hspace{4.6cm}}
    \end{array}
\right. \\
\text{where}~\fODE(t,\y) = \beta_{T-t}\y - \beta_{T-t}\bSigma_{T-t}^{-1}\y
\end{array}
\quad \quad \quad \quad \thinspace
\end{equation}}\\
\cmidrule(lr){2-3}
&\parbox{1cm}{Heun's method} & 
\parbox{\schemewidth}{\begin{equation}
	\label{eq:Heun_scheme:flow_ODE}
    \begin{array}{l}
	\left\{
    \begin{array}{rl}
    \yODEHeun_{0} & \sim \N \\
        \yODEHeun_{k+1/2} & = \yODEHeun_{k}+ \Delta_t\fODE(t_k,\yODEHeun_{k}) \\
        \yODEHeun_{k+1} & = \yODEHeun_{k}+ \frac{\Delta_t}{2}\left(\fODE(t_k,\yODEHeun_{k})+\fODE(t_{k+1},\yODEHeun_{k+1/2})\right) \text{\hspace{2cm}}
    \end{array}
\right. \\
\text{where}~
        \fODE(t,\y) = \beta_{T-t}\y - \beta_{T-t}\bSigma_{T-t}^{-1}\y 
\end{array}
\quad \quad \quad \thinspace \thinspace
\end{equation}}\\
\cmidrule(lr){2-3}
&\parbox{.75cm}{Runge-Kutta 4 (RK4)} & 
\parbox{\schemewidth}{\begin{equation}
	\label{eq:RK4_scheme:flow_ODE}
    \begin{array}{l}
	\left\{
    \begin{array}{rl}
    \yODERK_{0} & \sim \N \\
    K_1 & =\fODE(t_k,\yODERK_k) \\
    K_2 & = \fODE(t_{k+1/2},\yODERK_{k}+\tfrac{\Delta_t}{2}K_1) \\
     K_3 & = \fODE(t_{k+1/2},\yODERK_{k}+\tfrac{\Delta_t}{2}K_2) \\
      K_4 & = \fODE(t_{k+1},\yODERK_{k}+\Delta_t K_3) \\
      \yODERK_{k+1} & = \yODERK_k + \frac{\Delta_t}{6}\left(K_1+2K_2+2K_3+K_4\right)
    \end{array}
\right.\\
\text{where}~
        \fODE(t,\y) = \beta_{T-t}\y - \beta_{T-t}\bSigma_{T-t}^{-1}\y, t_{k+1/2}=t_{k}+\tfrac{\Delta_t}{2}\text{\hspace{3.2cm}}
 \end{array}
 \quad \quad \quad \thinspace \thinspace
\end{equation}}\\
	\bottomrule
	\end{tabular}
	\caption{\footnotesize\label{tab:discretization_schemes} \textbf{Stochastic and deterministic discretization schemes}. EM, EI and DDPM disctretize the backward SDE of \Cref{eq:sde_backward_generative}, Euler, Heun and RK4 schemes discretize of the probability flow ODE of \Cref{eq:flow_ode_gen} with a regular time schedule $\left(t_k\right)_{0\leq k\leq N}$ with stepsize $\Delta_t = \tfrac{T}{N}$}.
\end{table*}

\newpage
\section{Computation of the 2-Wasserstein distances for numerical schemes}
\label{appendix:computation_W2_errors}

The 2-Wasserstein errors can be computed by using \Cref{eq:W2_Gaussian_commute_diago} recalled here:

\begin{equation}
	\Wass(\mathcal{N}(\zero,\bSigma_1),\mathcal{N}(\zero,\bSigma_2))^2 =  \sum_{1 \leq i \leq d} (\sqrt{\lambda_{i,1}}-\sqrt{\lambda_{i,2}})^2.
 \tag{\ref{eq:W2_Gaussian_commute_diago}}
\end{equation}

for two centered Gaussians $\mathcal{N}(\zero,\bSigma_1)$ and $\mathcal{N}(\zero,\bSigma_2)$ such that $\bSigma_1,\bSigma_2$ are simultaneously diagonalizable with respective eigenvalues $\left(\lambda_{i,1}\right)_{1\leq i \leq d},\left(\lambda_{i,2}\right)_{1\leq i \leq d}$. We aim at computing these errors between the Gaussian process folowing $\left(p_t\right)_{0 \leq t \leq T}$ and the processes induced by the discretization schemes. Table~\ref{tab:discretization_schemes} shows that each discretization scheme leads to a discrete time Gaussian process whose covariance matrix is diagonalizable in the basis of $\bSigma$ when intialize them with either $\N$ (usual sampling) or $p_T$ (no initialization error). Let detail this point for the Euler-Maruyama (EM) scheme. Let denote 
$\left(\V_i\right)_{1 \leq i \leq d}$ a basis of eigenvectors of $\bSigma$ and its associated eigenvalues $\left(\lambda_i\right)_{1 \leq i \leq d}$. For $0 \leq t \leq T$, $\bSigma_t = e^{-2B_t} \bSigma + (1-e^{-2B_t})\I$ and for all $1 \leq i \leq d$,
\begin{equation}
    \bSigma_t \V_i = \left(e^{-2B_t} \lambda_i + (1-e^{-2B_t})\right)\V_i
\end{equation}
Consequently $\bSigma_t$ admits the same eigenvectors than $\bSigma$ with associated eigenvalues $\left(\lambda_{i,t}\right)_{1 \leq i \leq d} = \left(e^{-2B_t}\lambda_i+(1-e^{-2B_t})\right)_{1 \leq i \leq d}$. Let study the covariance matrix of the EM process. 
Let denote $\left(\bSigma_{k}^\text{$\Delta$,EM}\right)_{\substack{1 \leq i \leq d, 0 \leq k \leq N}}$ the covariance matrix of the Gaussian process generated by the EM scheme at each step and $\left(\lambda_{i,k}^\text{$\Delta$,EM}\right)_{\substack{1 \leq i \leq d, 0 \leq k \leq N}}$ its eigenvalues. First,
\begin{equation}
    \bSigma_{0}^\text{$\Delta$,EM} = \left\{
    \begin{array}{ll}
        \I & \mbox{if $\y_T$ is initialized at $\N$ }  \\
        \bSigma_T & \mbox{if $\y_T$ is initialized at $p_T$ }  \\
    \end{array}
\right..
\end{equation}
And consequently,
\begin{equation}
    \lambda_{i,0}^\text{$\Delta$,EM} = \left\{
    \begin{array}{ll}
        1 & \mbox{if $\y_T$ is initialized at $\N$ }  \\
        e^{-2B_T}\lambda_i+ (1-e^{-2B_T}) & \mbox{if $\y_T$ is initialized at $p_T$ }  \\
    \end{array}
\right. \quad 1\leq i\leq d.
\end{equation}

Then, by Table~\ref{tab:discretization_schemes},
\begin{align}
    \ySDEEM_{1}
    & = \left(\I+ \Delta_t \beta_{T-t_0}\left(\I-2\bSigma_{T-t_0}^{-1}\right)\right)\ySDEEM_{0}+ \sqrt{2\Delta_t \beta_{T-t_0}}\z_0, ~	\z_0 \sim \N
\end{align}
and
\begin{align}
    \bSigma_{1}^\text{$\Delta$,EM}
    & = \left(\I+ \Delta_t \beta_{T-t_0}\left(\I-2\bSigma_{T-t_0}^{-1}\right)\right) \bSigma_{0}^\text{$\Delta$,EM} \left(\I+ \Delta_t \beta_{T-t_0}\left(\I-2\bSigma_{T-t_0}^{-1}\right)\right)^T + 2\Delta_t \beta_{T-t_0}\I \\
    & = \left(\I+ \Delta_t \beta_{T-t_0}\left(\I-2\bSigma_{T-t_0}^{-1}\right)\right)^2\bSigma_{0}^\text{$\Delta$,EM} + 2\Delta_t \beta_{T-t_0}\I \text{ because $\bSigma$ and $\bSigma_{0}^\text{$\Delta$,EM}$ commute}.
\end{align}
Let $1 \leq i \leq d$,
\begin{align}
    \bSigma_{1}^\text{$\Delta$,EM}\V_i
    & = \left[\left(1+ \Delta_t \beta_{T-t_0}\left(\I-\frac{2}{\lambda_{i,T-t_0}}\right)\right)^2\lambda_{i,0}^\text{$\Delta$,EM} + 2\Delta_t \beta_{T-t_0}\right] \V_i
\end{align}
Consequently, $(\V_i)_{1 \leq i \leq d}$ is also a basis of eigenvectors of $\bSigma_{1}^\text{$\Delta$,EM}$ and
\begin{equation}
    \lambda_{i,1}^\text{$\Delta$,EM} = \left(1+ \Delta_t \beta_{T-t_0}\left(\I-\frac{2}{\lambda_{i,T-t_0}}\right)\right)^2\lambda_{i,0}^\text{$\Delta$,EM} + 2\Delta_t \beta_{T-t_0}, \quad 1 \leq i \leq d.
\end{equation}
Thus, we can obtain the eigenvalues $\left(\lambda_{i,k}^\text{$\Delta$,EM}\right)_{\substack{1 \leq i \leq d, 0 \leq k \leq N}}$ at each time and plot at each time
\begin{equation}
    \sqrt{\sum_{1\leq i \leq d} \left(\sqrt{\lambda_{i,T-t_k}}-\sqrt{\lambda_{i,k}^\text{$\Delta$,EM}}\right)}, \quad 1 \leq k \leq N
\end{equation}
as done in Figure~\ref{fig:cifar_measures}. These computations can be led for the different schemes, as presented in Table~\ref{tab:schemes_eigenvalues}.

\begin{table}[H]
    \centering
     \tiny
    \begin{tabular}{cll}
    \toprule 
        \multirow{2}{*}{\rotatebox[origin=t]{90}{\tiny SDE schemes \hspace{.2cm}}} & EM & $\lambda_{i,k+1}^{\Delta,\text{EM}} = \left(1+\Delta_t \beta_{T-t_k}\left(1-\tfrac{2}{\lambda_{i,T-t_k}}\right)\right)^2\lambda_{i,k}^{\Delta,\text{EM}} + 2\Delta_t \beta_{T-t_k}, \quad 1 \leq i \leq d, 0 \leq k \leq N-1$ \\
        \cmidrule{2-3}
          & EI & $\lambda_{i,k+1}^{\Delta,\text{EI}} = \left(1+\gamma_{1,k}\left(1-\tfrac{2}{\lambda_{i,T-t_k}}\right)\right)^2\lambda_{i,k}^{\Delta,\text{EI}}+ 2\gamma_{2,k}\quad 1 \leq i \leq d, 0 \leq k \leq N-1$ \\
          \cmidrule{2-3}
          & DDPM & $\lambda_{i,k+1}^{\Delta,\text{DDPM}} = \frac{1}{1-\beta_k^\text{DDPM}} \left(1- \tfrac{\beta_k^\text{DDPM}}{\lambda_{i,T-t_k}}\right)^2\lambda_{i,k}^{\Delta,\text{DDPM}}+ \beta^\text{DDPM}_k\quad 1 \leq i \leq d, 0 \leq k \leq N-1$ \\
          \midrule
          \multirow{2}{*}{\rotatebox[origin=t]{90}{\tiny ODE schemes \hspace{1cm}}} & Euler & 
          $\lambda_{i,k+1}^{\Delta,\text{Euler}} =\left(1+\Delta_t\beta_{T-t_k}\left(1-\tfrac{1}{\lambda_{i,T-t_k}}\right)\right)^2\lambda_i^{\text{Euler},k}\quad 1 \leq i \leq d, 0 \leq k \leq N-1$
          \\
          \cmidrule{2-3}
          & Heun &  $\lambda_{i,k+1}^{\Delta,\text{Heun}}  = \left(1+ \frac{\Delta_t}{2} \beta_{T-t_k}\left(1-\tfrac{1}{\lambda_{i,T-t_k}}\right)+\frac{\Delta_t}{2} \beta_{T-t_{k+1}}\left(1-\tfrac{1}{\lambda_{i,T-t_{k+1}}}\right)\left( 1+ \Delta_t \beta_{T-t_k}\left(1-\tfrac{1}{\lambda_{i,T-t_k}}\right)\right)\right)\lambda_{i,k}^{\Delta,\text{Heun}}$\\
          & & $\quad 1 \leq i \leq d, 0 \leq k \leq N-1$\\
           \cmidrule{2-3}
          & RK4 &  
          $\begin{array}{ll}
               C_1 & = \beta_{T-t_k}\left(1-\tfrac{1}{\lambda_{i,T-t_k}}\right) \\
               C_2 & = \beta_{T-t_{k+1/2}}\left(1-\tfrac{1}{\lambda_{i,T-t_{k+1/2}}}\right)\left(1+\tfrac{\Delta_t}{2}C_1\right) \\
               C_3 & = \beta_{T-t_{k+1/2}}\left(1-\tfrac{1}{\lambda_{i,T-t_{k+1/2}}}\right)\left(1+\tfrac{\Delta_t}{2}C_2\right) \\
               C_4 & = \beta_{T-t_{k+1}}\left(1-\tfrac{1}{\lambda_{i,T-t_{k+1}}}\right)\left(1+\Delta_t C_3\right) \\
               \lambda_{i,k+1}^{\Delta,\text{RK4}}
               & = \left(1+\tfrac{\Delta_t}{6}(C_1+2C_2+2C_3+C_4)\right)^2\lambda_{i,k}^{\Delta,\text{RK4}} \\
               & \quad 1 \leq i \leq d, 0 \leq k \leq N-1
          \end{array}$ \\
          \bottomrule
    \end{tabular}
    \caption{Recursive form of the eigenvalues of the covariance matrix associated with the Gaussian process generated by the different schemes for a regular time schedule $\left(t_k\right)_{0\leq k\leq N}$ with steps $\Delta_t = \tfrac{T}{N}$.}
    \label{tab:schemes_eigenvalues}
\end{table}

\newpage

\section{Ablation study of Gaussian CIFAR-10}

\label{appendix:ablation_study_cifar10}

The complete ablation study of Gaussian CIFAR-10 is presented in Table~\ref{tab:ablation_study_cifar10}.

\setlength{\tabcolsep}{10pt} 
\begin{table}[H]
\centering
\footnotesize
\begin{tabular}{llllllllllll} 
\toprule
& &\multicolumn{2}{c}{Continuous}
& \multicolumn{2}{c}{NFE = $50$}
& \multicolumn{2}{c}{NFE = $250$}
& \multicolumn{2}{c}{NFE = $500$}
& \multicolumn{2}{c}{NFE = $1000$}
\\
\cmidrule(lr){3-4}
\cmidrule(lr){5-6}
\cmidrule(lr){7-8}
\cmidrule(lr){9-10}
\cmidrule(lr){11-12}
& &$p_T$ & $\mathcal{N}_0$ & 
$p_T$ & $\mathcal{N}_0$ &
$p_T$ & $\mathcal{N}_0$ &
$p_T$ & $\mathcal{N}_0$ &
$p_T$ & $\mathcal{N}_0$ \\
\midrule
\parbox[t]{2mm}{\multirow{4}{*}{\rotatebox[origin=c]{90}{EM}}}
& \multicolumn{1}{|l}{$\varepsilon = 0$}
 & 0
 & 6.7E-04
 & 4.78
 & 4.78
 & 0.65
 & 0.66
 & 0.32
 & 0.32
 & 0.16
 & 0.16
\\
& \multicolumn{1}{|l}{$\varepsilon = 10^{-5}$}
 & 4.1E-03
 & 4.2E-03
 & 4.77
 & 4.77
 & 0.66
 & 0.66
 & 0.32
 & 0.32
 & 0.16
 & 0.16
\\
& \multicolumn{1}{|l}{$\varepsilon = 10^{-4}$}
 & 0.03
 & 0.03
 & 4.76
 & 4.76
 & 0.66
 & 0.66
 & 0.32
 & 0.32
 & 0.17
 & 0.17
\\
& \multicolumn{1}{|l}{$\varepsilon = 10^{-3}$}
 & 0.18
 & 0.18
 & 4.68
 & 4.68
 & 0.70
 & 0.70
 & 0.40
 & 0.40
 & 0.27
 & 0.27
\\
&&&&&&&&&&&\\
\parbox[t]{2mm}{\multirow{4}{*}{\rotatebox[origin=c]{90}{EI}}}
& \multicolumn{1}{|l}{$\varepsilon = 0$}
&&
 & 2.81
 & 2.81
 & 0.57
 & 0.57
 & 0.30
 & 0.30
 & 0.16
 & 0.16
\\
& \multicolumn{1}{|l}{$\varepsilon = 10^{-5}$}
&&
 & 2.81
 & 2.81
 & 0.57
 & 0.57
 & 0.30
 & 0.30
 & 0.16
 & 0.16
\\
& \multicolumn{1}{|l}{$\varepsilon = 10^{-4}$}
&&
 & 2.82
 & 2.82
 & 0.58
 & 0.58
 & 0.31
 & 0.31
 & 0.17
 & 0.17
\\
& \multicolumn{1}{|l}{$\varepsilon = 10^{-3}$}
&&
 & 2.91
 & 2.91
 & 0.67
 & 0.67
 & 0.41
 & 0.41
 & 0.29
 & 0.29
\\
&&&&&&&&&&&\\
\parbox[t]{2mm}{\multirow{4}{*}{\rotatebox[origin=c]{90}{DDPM}}}
& \multicolumn{1}{|l}{$\varepsilon = 0$}
&&
 & 6.82
 & 6.82
 & 0.97
 & 0.97
 & 0.47
 & 0.47
 & 0.23
 & 0.23
\\
& \multicolumn{1}{|l}{$\varepsilon = 10^{-5}$}
&&
 & 6.82
 & 6.82
 & 0.97
 & 0.97
 & 0.47
 & 0.47
 & 0.24
 & 0.23
\\
& \multicolumn{1}{|l}{$\varepsilon = 10^{-4}$}
&&
 & 6.81
 & 6.81
 & 0.98
 & 0.98
 & 0.47
 & 0.47
 & 0.24
 & 0.24
\\
& \multicolumn{1}{|l}{$\varepsilon = 10^{-3}$}
&&
 & 6.74
 & 6.74
 & 1.00
 & 1.00
 & 0.53
 & 0.53
 & 0.32
 & 0.32
\\
&&&&&&&&&&&\\
\parbox[t]{2mm}{\multirow{4}{*}{\rotatebox[origin=c]{90}{Euler}}}
& \multicolumn{1}{|l}{$\varepsilon = 0$}
 & 0
 & 0.07
 & 1.72
 & 1.78
 & 0.38
 & 0.44
 & 0.20
 & 0.26
 & 0.10
 & 0.17
\\
& \multicolumn{1}{|l}{$\varepsilon = 10^{-5}$}
 & 4.1E-03
 & 0.07
 & 1.72
 & 1.78
 & 0.38
 & 0.44
 & 0.20
 & 0.26
 & 0.10
 & 0.17
\\
& \multicolumn{1}{|l}{$\varepsilon = 10^{-4}$}
 & 0.03
 & 0.08
 & 1.72
 & 1.78
 & 0.38
 & 0.44
 & 0.20
 & 0.26
 & 0.11
 & 0.17
\\
& \multicolumn{1}{|l}{$\varepsilon = 10^{-3}$}
 & 0.18
 & 0.19
 & 1.73
 & 1.79
 & 0.42
 & 0.48
 & 0.27
 & 0.32
 & 0.21
 & 0.25
\\
&&&&&&&&&&&\\
\parbox[t]{2mm}{\multirow{4}{*}{\rotatebox[origin=c]{90}{Heun}}}
& \multicolumn{1}{|l}{$\varepsilon = 0$}
&&
 & -
 & -
 & -
 & -
 & -
 & -
 & -
 & -
\\
& \multicolumn{1}{|l}{$\varepsilon = 10^{-5}$}
&&
 & 23.42
 & 23.42
 & 2.86
 & 2.87
 & 1.05
 & 1.06
 & 0.37
 & 0.38
\\
& \multicolumn{1}{|l}{$\varepsilon = 10^{-4}$}
&&
 & 4.68
 & 4.68
 & 0.43
 & 0.44
 & 0.12
 & 0.14
 & 0.03
 & 0.08
\\
& \multicolumn{1}{|l}{$\varepsilon = 10^{-3}$}
&&
 & 0.58
 & 0.59
 & 0.13
 & 0.15
 & 0.16
 & 0.18
 & 0.17
 & 0.19
\\
&&&&&&&&&&&\\
\parbox[t]{2mm}{\multirow{4}{*}{\rotatebox[origin=c]{90}{RK4}}}
& \multicolumn{1}{|l}{$\varepsilon = 0$}
&&
 & -
 & -
 & -
 & -
 & -
 & -
 & -
 & -
\\
& \multicolumn{1}{|l}{$\varepsilon = 10^{-5}$}
&&
 & 67.32
 & 67.32
 & 5.49
 & 5.49
 & 2.11
 & 2.11
 & 0.77
 & 0.77
\\
& \multicolumn{1}{|l}{$\varepsilon = 10^{-4}$}
&&
 & 14.35
 & 14.35
 & 0.93
 & 0.93
 & 0.29
 & 0.29
 & 0.07
 & 0.10
\\
& \multicolumn{1}{|l}{$\varepsilon = 10^{-3}$}
&&
 & 2.60
 & 2.60
 & 0.09
 & 0.12
 & 0.15
 & 0.17
 & 0.17
 & 0.19
\\
\bottomrule
\end{tabular}
\caption{\small \label{tab:ablation_study_cifar10} \textbf{Ablation study of Wasserstein errors for the CIFAR-10 Gaussian.} For a given discretization scheme, the table presents the Wasserstein distance associated with the truncation error for different values of $\varepsilon$. The columns $p_T$ and $\N$ show the influence of the initialization error. The continuous column corresponds to the continuous SDE or ODE linked with the scheme (identical values for EM, EI, DDPM and Euler, Heun, RK4) and a given number of score function evaluation NFE. }
\end{table}

\newpage

\section{Theoretical Wasserstein distance for the ADSN model}

\label{appendix:ADSN_error}

 As done for the Gaussian CIFAR-10, the Wasserstein errors can be computed for the ADSN model as shown in Figure~\ref{fig:ADSN_measures} and Table~\ref{tab:ablation_study_ADSN}.

\newlength{\adsnerrorwidth}
\setlength{\adsnerrorwidth}{0.5\linewidth}

\setlength{\tabcolsep}{1pt} 
 \begin{figure}[h]
 \small
\begin{tabular}{cc}
    \includegraphics[width = \adsnerrorwidth]{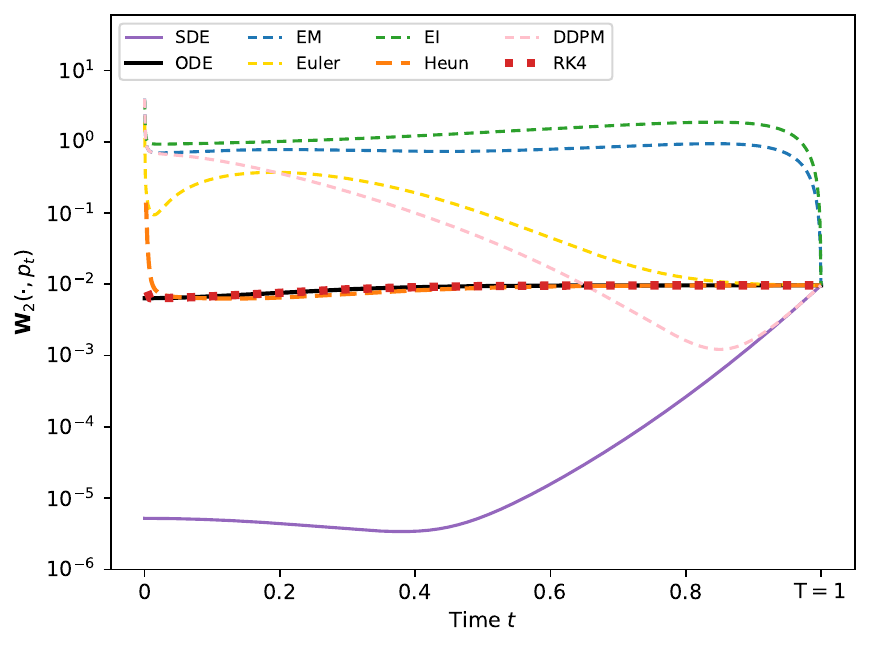} &
    \includegraphics[width = \adsnerrorwidth]{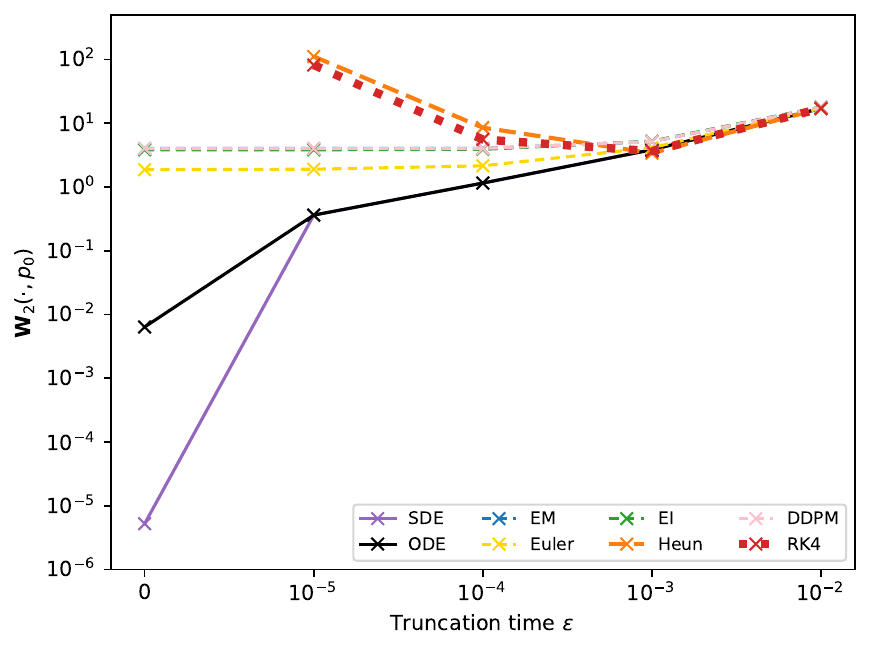} \\ (a) Initialization error along the integration time. &
    (b) Truncation error for different truncation time $\varepsilon$
\end{tabular}
 	\caption{\small \label{fig:ADSN_measures} \textbf{Wasserstein errors for the diffusion models associated with the Gaussian microtextures.} Left: Evolution of the Wasserstein distance between $p_t$ and the distributions associated with the continuous SDE, the continuous flow ODE and four discrete sampling schemes with standard $\N$ initialization, either stochastic (Euler-Maruyama (EM), Exponential Integrator (EI) and Denoising Diffusion Probabilistic Model (DDPM) ) or deterministic (Euler, Heun and Runge-Kutta 4 (RK4)).
While the continuous SDE is less sensible than the continuous ODE (as proved by Proposition~\ref{prop:marginal_generatives}), the initialization error impacts all discrete schemes.
Heun's method has the lowest error and is very close to the theoretical ODE, except for the last step that is usually discarded when using time truncation.
Right: Wasserstein errors due to time truncation for various truncation times $\varepsilon$. Heun's scheme is not defined without truncation time due to the zero eigenvalue.
Interestingly, for the standard practice truncation time $\varepsilon = 10^{-3}$, all numerical schemes have a comparable error close to their continuous counterparts.}

\end{figure}

\setlength{\tabcolsep}{5pt} 
\begin{table}[h]
\centering
\footnotesize
\begin{tabular}{llllllllllll} 
\toprule
& &\multicolumn{2}{c}{Continuous}
& \multicolumn{2}{c}{NFE = $50$}
& \multicolumn{2}{c}{NFE = $250$}
& \multicolumn{2}{c}{NFE = $500$}
& \multicolumn{2}{c}{NFE = $1000$}
\\
\cmidrule(lr){3-4}
\cmidrule(lr){5-6}
\cmidrule(lr){7-8}
\cmidrule(lr){9-10}
\cmidrule(lr){11-12}
& &$p_T$ & $\mathcal{N}_0$ & 
$p_T$ & $\mathcal{N}_0$ &
$p_T$ & $\mathcal{N}_0$ &
$p_T$ & $\mathcal{N}_0$ &
$p_T$ & $\mathcal{N}_0$ \\
\midrule
\parbox[t]{2mm}{\multirow{4}{*}{\rotatebox[origin=c]{90}{EM}}}
& \multicolumn{1}{|l}{$\varepsilon = 0$}
 & 0
 & 5.2E-06
 & 53.37
 & 53.37
 & 10.58
 & 10.58
 & 6.27
 & 6.27
 & 4.02
 & 4.02
\\
& \multicolumn{1}{|l}{$\varepsilon = 10^{-5}$}
 & 0.36
 & 0.36
 & 53.35
 & 53.35
 & 10.57
 & 10.57
 & 6.26
 & 6.26
 & 4.02
 & 4.02
\\
& \multicolumn{1}{|l}{$\varepsilon = 10^{-4}$}
 & 1.15
 & 1.15
 & 53.21
 & 53.21
 & 10.53
 & 10.53
 & 6.25
 & 6.25
 & 4.03
 & 4.03
\\
& \multicolumn{1}{|l}{$\varepsilon = 10^{-3}$}
 & 3.84
 & 3.84
 & 51.92
 & 51.92
 & 10.55
 & 10.55
 & 6.80
 & 6.80
 & 5.16
 & 5.16
\\
&&&&&&&&&&&\\
\parbox[t]{2mm}{\multirow{4}{*}{\rotatebox[origin=c]{90}{EI}}}
& \multicolumn{1}{|l}{$\varepsilon = 0$}
&&
 & 30.91
 & 30.91
 & 8.85
 & 8.85
 & 5.71
 & 5.71
 & 3.84
 & 3.84
\\
& \multicolumn{1}{|l}{$\varepsilon = 10^{-5}$}
&&
 & 30.92
 & 30.92
 & 8.85
 & 8.85
 & 5.72
 & 5.72
 & 3.84
 & 3.84
\\
& \multicolumn{1}{|l}{$\varepsilon = 10^{-4}$}
&&
 & 31.01
 & 31.01
 & 8.92
 & 8.92
 & 5.78
 & 5.78
 & 3.90
 & 3.90
\\
& \multicolumn{1}{|l}{$\varepsilon = 10^{-3}$}
&&
 & 31.94
 & 31.94
 & 9.74
 & 9.74
 & 6.76
 & 6.76
 & 5.24
 & 5.24
\\
&&&&&&&&&&&\\
\parbox[t]{2mm}{\multirow{4}{*}{\rotatebox[origin=c]{90}{DDPM}}}
& \multicolumn{1}{|l}{$\varepsilon = 0$}
&&
 & 53.66
 & 53.66
 & 10.58
 & 10.58
 & 6.27
 & 6.27
 & 4.02
 & 4.02
\\
& \multicolumn{1}{|l}{$\varepsilon = 10^{-5}$}
&&
 & 53.64
 & 53.64
 & 10.58
 & 10.58
 & 6.26
 & 6.26
 & 4.02
 & 4.02
\\
& \multicolumn{1}{|l}{$\varepsilon = 10^{-4}$}
&&
 & 53.50
 & 53.50
 & 10.54
 & 10.54
 & 6.25
 & 6.25
 & 4.03
 & 4.03
\\
& \multicolumn{1}{|l}{$\varepsilon = 10^{-3}$}
&&
 & 52.19
 & 52.19
 & 10.56
 & 10.56
 & 6.80
 & 6.80
 & 5.16
 & 5.16
\\
&&&&&&&&&&&\\
\parbox[t]{2mm}{\multirow{4}{*}{\rotatebox[origin=c]{90}{Euler}}}
& \multicolumn{1}{|l}{$\varepsilon = 0$}
 & 0
 & 6.4E-03
 & 5.69
 & 5.70
 & 3.27
 & 3.27
 & 2.50
 & 2.51
 & 1.87
 & 1.87
\\
& \multicolumn{1}{|l}{$\varepsilon = 10^{-5}$}
 & 0.36
 & 0.36
 & 5.70
 & 5.71
 & 3.28
 & 3.28
 & 2.53
 & 2.53
 & 1.90
 & 1.90
\\
& \multicolumn{1}{|l}{$\varepsilon = 10^{-4}$}
 & 1.15
 & 1.15
 & 5.80
 & 5.80
 & 3.43
 & 3.43
 & 2.72
 & 2.72
 & 2.15
 & 2.15
\\
& \multicolumn{1}{|l}{$\varepsilon = 10^{-3}$}
 & 3.84
 & 3.84
 & 6.79
 & 6.79
 & 4.85
 & 4.85
 & 4.41
 & 4.41
 & 4.14
 & 4.14
\\
&&&&&&&&&&&\\
\parbox[t]{2mm}{\multirow{4}{*}{\rotatebox[origin=c]{90}{Heun}}}
& \multicolumn{1}{|l}{$\varepsilon = 0$}
&&
 & -
 & -
 & -
 & -
 & -
 & -
 & -
 & -
\\
& \multicolumn{1}{|l}{$\varepsilon = 10^{-5}$}
&&
 & 2.4E+03
 & 2.4E+03
 & 3.0E+02
 & 3.0E+02
 & 1.1E+02
 & 1.1E+02
 & 40.00
 & 40.00
\\
& \multicolumn{1}{|l}{$\varepsilon = 10^{-4}$}
&&
 & 2.3E+02
 & 2.3E+02
 & 26.34
 & 26.34
 & 8.54
 & 8.54
 & 2.01
 & 2.01
\\
& \multicolumn{1}{|l}{$\varepsilon = 10^{-3}$}
&&
 & 15.42
 & 15.42
 & 2.25
 & 2.25
 & 3.40
 & 3.40
 & 3.73
 & 3.73
\\
&&&&&&&&&&&\\
\parbox[t]{2mm}{\multirow{4}{*}{\rotatebox[origin=c]{90}{RK4}}}
& \multicolumn{1}{|l}{$\varepsilon = 0$}
&&
 & -
 & -
 & -
 & -
 & -
 & -
 & -
 & -
\\
& \multicolumn{1}{|l}{$\varepsilon = 10^{-5}$}
&&
 & 6.9E+03
 & 6.9E+03
 & 5.7E+02
 & 5.7E+02
 & 2.2E+02
 & 2.2E+02
 & 82.03
 & 82.03
\\
& \multicolumn{1}{|l}{$\varepsilon = 10^{-4}$}
&&
 & 6.8E+02
 & 6.8E+02
 & 52.25
 & 52.25
 & 18.61
 & 18.61
 & 5.57
 & 5.57
\\
& \multicolumn{1}{|l}{$\varepsilon = 10^{-3}$}
&&
 & 59.79
 & 59.79
 & 0.62
 & 0.62
 & 2.94
 & 2.94
 & 3.66
 & 3.66
\\
\bottomrule
\end{tabular}
\caption{\small \label{tab:ablation_study_ADSN} \textbf{Ablation study of Wasserstein errors for the Gaussian microtextures.} For a given discretization scheme, the table presents the Wasserstein distance associated with the truncation error for different values of $\varepsilon$. The columns $p_T$ and $\N$ show the influence of the initialization error. The continuous column corresponds to the continuous SDE or ODE linked with the scheme (identical values for EM, EI, DDPM and Euler, Heun, RK4). Note that the Heun scheme is not defined without truncation time due to the zero eigenvalue.}
\end{table}

\newpage

\section{Study of the covariance matrix of the ADSN distribution}

\subsection{Reminders on the Discrete Fourier Transform (DFT)}

\label{appendix:reminder_dft}
For a given image $\V \in \R^{3\times \OMN}$, we define the DFT of $\V$, $\widehat{\V} \in \R^{3 \times \OMN}$ such that for $1 \leq c \leq 3$, $\xi \in \OMN$

\begin{equation}
	\widehat{\V}_{c,\xi} = \sum_{x \in \OMN} \V_{c,x} \exp({-\frac{2i\pi x_1 \xi_1}{M}})\exp(-\frac{2i\pi x_2 \xi_2}{N}), \quad i^2 = -1
\end{equation}

where $\widehat{\V}_{c,\xi}$ is the value of $\widehat{\V}$ at coordinate $\xi$ of the k-th channel of $\widehat{\V}$. For $\U \in \R^{3\OMN}$, by defining $\U \star \V$ the periodic convolution such that for $1 \leq c \leq 3, x\in \R^{\OMN}$:

\begin{equation}
	\left(\U\star \V \right)_{c,x} = \sum_{y \in \OMN} \U_{c,x-y}\V_{c,y}
\end{equation}

we have:

\begin{equation}
	\widehat{\U\star \V} = \widehat{\U}\odot \widehat{\V},
\end{equation}

where $\odot$ is the componentwise product.
By denoting $\overleftarrow{\V}$ the image which is reversing the arrangement of elements in vector $\V$ ,
\begin{equation}
    \widehat{\overleftarrow{v}}  = \overline{\widehat{v}}
\end{equation}
where $\overline{\widehat{v}}$ is the component-wise conjugate of $\widehat{v}$.

\subsection{Eigenvectors of the covariance matrix of the ADSN distribution}
\label{appendix:covariance_ADSN_eigen}

Let $\U \in \R^{3\times \OMN}$ and its associated texton $\bt \in \R^{3\times \OMN}$. The distribution $\ADSN(\U)$ is the Gaussian distribution of $\X = \bt \star \w$ such that:
\begin{equation}
	\label{eq:X_values_ADSN_samples}
	\X_i = \bt_i \star \w \in \R^{\OMN}, 1 \leq i \leq 3, \w \sim \N
\end{equation}

Consequently, denoting $\bSigma$ the covariance of $\ADSN(\U)$, for $\V \in \R^{3\OMN}$, $\widehat{\bSigma\V}$ the Fourier transform of $\bSigma\V$ and $\widehat{\bSigma\V}_i$ its $i$th channel for $1 \leq i \leq 3$,

\begin{equation}
	\label{eq:ADSN_covariance_action}
	\widehat{\bSigma\V}_i 
    = \widehat{\bt}_i \odot \left(\overline{\widehat{\bt}_1} \odot\widehat{\V}_1+\overline{\widehat{\bt}_2} \odot\widehat{\V }_2+ \overline{\widehat{\bt}_3} \odot\widehat{\V}_3\right)
\end{equation}

This equation proves that the kernel of $\bSigma$ contains the kernel of $\V \in \R^{3 \times \OMN} \mapsto \overline{\widehat{\bt}_1} \widehat{\V}_1+\overline{\widehat{\bt}_2} \widehat{\V }_2+ \overline{\widehat{\bt}_3} \widehat{\V}_3 \in \R^{\OMN}$ which has a dimension greater than $2MN$. Consequently, $0$ is eigenvalue of $\bSigma$ with multiplicity greater than $2MN$. Furthermore, for $\xi \in \OMN$, denoting $\U^{1,\xi}$ such that:
\begin{equation}
	\label{eq:def_v_xi}
	\widehat{\U}_i^{1,\xi}(\omega) = \mathbf{1}_{\omega = \xi} \widehat{\bt}_i(\omega), 1 \leq i \leq 3, \omega \in \OMN
\end{equation}

we have,
\begin{equation}
	\label{eq:v_vi_is_eigen_vector}
	\bSigma\U^{1,\xi} = (|\widehat{\bt}_1(\xi)|^2+|\widehat{\bt}_2(\xi)|^2+|\widehat{\bt}_3(\xi)|^2)\U^{1,\xi}.
\end{equation}

Furthermore, the family $\left(\U^{1,\xi}\right)_{\xi \in \OMN}$ is orthogonal. Thus, the eigenvalues of $\bSigma$ are $\left(|\widehat{\bt}_1(\xi)|^2+|\widehat{\bt}_2(\xi)|^2+|\widehat{\bt}_3(\xi)|^2\right)_{\xi \in \OMN}$ and $0$ with multiplicity $2MN$.

For $\xi \in \OMN$, we denote $\U^{2,\xi}$,$\U^{3,\xi}$ such that for $\omega \in \OMN$:

\begin{align}
	\label{eq:def_v_ker_xi}
	& \left\{
    \begin{array}{ll}
        \widehat{\U}_1^{2,\xi}(\omega) & = -\mathbf{1}_{\omega = \xi}\overline{\widehat{\bt}}_3(\omega)   \\
        \widehat{\U}_2^{2,\xi}(\omega) & = 0  \\
        \widehat{\U}_3^{2,\xi}(\omega) & = \mathbf{1}_{\omega = \xi}\overline{\widehat{\bt}}_1(\omega)  \\
    \end{array}
\right. \\
& \left\{
    \begin{array}{ll}
        \widehat{\U}_1^{3,\xi}(\omega) & = 0   \\
        \widehat{\U}_2^{3,\xi}(\omega) & = -\mathbf{1}_{\omega = \xi}\overline{\widehat{\bt}}_3(\omega)  \\
        \widehat{\U}_3^{3,\xi}(\omega) & = \mathbf{1}_{\omega = \xi}\overline{\widehat{\bt}}_2(\omega) \\
    \end{array}
\right.
\end{align}

We have

\begin{align}
	\bSigma\U^{2,\xi} & = 0.\U^{2,\xi} \\
	\bSigma\U^{3,\xi} & = 0.\U^{3,\xi}.
\end{align}

Then, applying the orthonomalization of Gram-Schmidt on each tuple $(\U^{1,\xi},\U^{2,\xi},\U^{3,\xi})_{\xi \in \OMN }$, we obtain an orthonormal basis in the Fourier domain $(\widehat{\V}^{1,\xi},\widehat{\V}^{2,\xi},\widehat{\V}^{3,\xi})_{\xi \in \OMN }$ of eigenvectors of $\bSigma$. More precisely, for $\xi_1,\xi_2 \in \OMN$, $1 \leq j_1, j_2 \leq 3$,

\begin{align}
\left(\overline{\widehat{\V}}^{j_1,\xi_1}\right)^T\widehat{\V}^{j_2,\xi_2}
	& = \sum_{\substack{x_1 \in \OMN \\ x_2 \in \OMN}}\overline{\widehat{\V}}^{j_1,\xi_1}_{x_1}\widehat{\V}^{j_2,\xi_2}_{x_2} \\
	&= \mathbf{1}_{\substack{j_1 = j_2 \\ \xi_1 = \xi_2}}
\end{align}

which is applying the square root of $\bSigma$ to the white Gaussian noise $\w$. Furthermore, we can ensure that for $\xi \neq \omega \in \OMN, 1 \leq j \leq 3$, $\widehat{\V}^{j,\xi}(\omega)=0$ such that only the frequency $\xi$ is active for $\widehat{\V}^{j}$. Consequently, for $\w \in \R^{3\OMN}$,

\begin{equation}
	\label{eq:v_inner}
	\overline{\widehat{\w}}^T {\V}^{j,\xi} = \sum_{1 \leq i \leq 3} \overline{\widehat{\w}}_{i}(\xi) \widehat{\V}^{j,\xi}_{i}(\xi).
\end{equation}

In particular,

\begin{equation}
	\label{eq:v_norm}
	\left(\widehat{\overline{\V}}^{j,\xi}\right)^T \widehat{\overline{\V}}^{j,\xi} = \|\widehat{\V}^{j,\xi}\|^2  = \sum_{1 \leq i \leq 3} \left|{\V}^{j,\xi}_{i}(\xi)\right|^2 = 1.
\end{equation}

\subsection{Computation of the empirical Wasserstein error in the ADSN covariance diagonalization basis}
\label{appendix:covariance_ADSN_W2}

Let consider a Gaussian distribution $\mathcal{N}(\zero,\G)$ such that there exists $(\lambda_1^{\xi},\lambda_2^{\xi},\lambda_3^{\xi})_{\xi \in \OMN }$ such that for all $\xi \in \OMN$,
\begin{equation}
	\G\V^{j,\xi}  = \lambda_j^{\xi}\V^{j,\xi}, \quad 1 \leq j \leq 3.
\end{equation}
Let $\w\sim \N \in \R^{3\OMN}$,  $(\V^{1,\xi},\V^{2,\xi},\V^{3,\xi})_{\xi \in \OMN }$ is an orthonormal basis in the Fourier domain such that:
\begin{align}
	\widehat{\w}
	& = \sum_{\xi \in \OMN}\left(\left[\overline{\widehat{\w}}^T\widehat{\V}^{1,\xi}\right]\widehat{\V}^{1,\xi}+\left[\overline{\widehat{\w}}^T\widehat{\V}^{2,\xi}\right]\widehat{\V}^{2,\xi}+\left[\overline{\widehat{\w}}^T\widehat{\V}^{3,\xi}\right]\widehat{\V}^{3,\xi}\right) \\
\end{align}
A sample drawn from $\mathcal{N}(\zero,\G)$ has the same distribution as $\Y$ given by

\begin{equation}
	\widehat{\Y} = \sum_{\xi \in \OMN}\sqrt{\lambda_1^{\xi}} \left[\overline{\widehat{\w}}^T\widehat{\V}^{1,\xi}\right]\widehat{\V}^{1,\xi}+ \sum_{\xi \in \OMN}\sqrt{\lambda_2^{\xi}}\left[\overline{\widehat{\w}}^T\widehat{\V}^{2,\xi}\right]\widehat{\V}^{2,\xi}+ \sum_{\xi \in \OMN}\sqrt{\lambda_3^{\xi}}\left[\overline{\widehat{\w}}^T\widehat{\V}^{3,\xi}\right]\widehat{\V}^{3,\xi}.
\end{equation}

Note that the three channels of $\w$ are independent. Furthermore, for $1 \leq j \leq 3$

\begin{align}
	\left(\overline{\widehat{\V}}^{j,\xi}\right)^T\widehat{\Y}
	&  = \sqrt{\lambda_j^{\xi}} \left[\overline{\widehat{\w}}^T\widehat{\V}^{j,\xi}\right]\left\|\widehat{\V}^{j,\xi}\right\|^2  = \sqrt{\lambda_j^{\xi}} \left[\overline{\widehat{\w}}^T\widehat{\V}^{j,\xi}\right]\\
	\left|\left(\overline{\widehat{\V}}^{j,\xi}\right)^T\widehat{\Y}\right|^2
	&  = \lambda_j^{\xi} \left|\overline{\widehat{\w}}^T\widehat{\V}^{j,\xi}\right|^2 \\
	\E\left[\left|\left(\overline{\widehat{\V}}^{j,\xi}\right)^T\widehat{\Y}\right|^2\right]
	& = \lambda_j^{\xi} \E\left[\left|\overline{\widehat{\w}}^T\widehat{\V}^{j,\xi}\right|^2\right] \\
	\E\left[\left|\overline{\widehat{\w}}^T\widehat{\V}^{j,\xi}\right|^2\right]
	& = \sum_{1 \leq c_1,c_2 \leq 3} \E\left[\overline{\widehat{\w}}_{c_1}(\xi)\widehat{\w}_{c_2}(\xi)\right]\widehat{\V}_{c_1}^{j,\xi}(\xi)\overline{\widehat{\V}}_{c_2}(\xi) \text{ by \Cref{eq:v_inner} }\\
	& = \sum_{1 \leq c \leq 3} \E\left[\left|\widehat{\w}_{c}(\xi)\right|^2\right]\left|\widehat{\V}_{c}^{j,\xi}(\xi)\right|^2 \text{ because the channels are independent} \\
	& = MN \sum_{1 \leq c \leq 3} \left|\widehat{\V}_{c}^{j,\xi}(\xi)\right|^2 \text{ because $ \E\left[\left|\widehat{\w}_{c}(\xi)\right|^2\right] = MN$} \\
	& = MN   \text{ by \Cref{eq:v_norm}. }
 \end{align}

Finally,
\begin{align}
	\E\left[\left|\left(\overline{\widehat{\V}}^{j,\xi}\right)^T\widehat{\Y}\right|^2\right] = MN \lambda_1^{\xi} 
\end{align}

Finally, for a given sampling $\left(\Y_k\right)_{1\leq k \leq N_{\text{samples}}}$ following the distribution $\mathcal{N}(\zero,\G)$, an estimator of $\lambda_j^\xi$ is:

\begin{equation}
	\lambda_j^{\xi,\text{emp.}} = \frac{1}{N_{\text{samples}} MN } \sum_{k=1}^{N_{\text{samples}}} \left|\left(\overline{\widehat{\V}}^{j,\xi}\right)^T\widehat{\Y_k}\right|^2.
	\end{equation}

	The empirical Wasserstein distance between the Gaussian distribution $\mathcal{N}(\zero,\Gamma)$ and the ADSN model with texton $\bt$ is:

	\begin{equation}
		\Wemp(\mathcal{N}^{\text{emp.}}(\zero,\G),\ADSN(\U)) = \sqrt{\sum_{\xi \in \OMN} \left(\left(\sqrt{\lambda_1^{\xi,\text{emp.}}} - \sqrt{\lambda_1^{\xi,\ADSN}}\right)^2 + \lambda_2^{\xi,\text{emp.}}+\lambda_3^{\xi,\text{emp.}}\right)}
	\end{equation}
	
	with $\lambda_1^{\xi,\ADSN} = |\widehat{\bt}_1(\xi)|^2+|\widehat{\bt}_2(\xi)|^2+|\widehat{\bt}_3(\xi)|^2$ for $\xi \in \OMN$.

	Furthermore, the computations can be vectorized by componentwise products in the Fourier domain.

\end{document}